\renewcommand{\cite}{\citep}
\def\shownotes{0}
\newcommand{\authnote}[2]{[#1: #2]}
\newcommand{\authnote}[2]{}
\newcommand{\pref}{p_\text{ref}}
\newcommand{\pproxy}{p_\theta}
\newcommand{\examplexi}{X_z[i]}
\newcommand{\unif}{\textup{unif}}
\newcommand{\pstar}{p^*}
\newcommand{\lambdaxprimex}{\lambda_z(x)}
\newcommand{\hatthetaz}{\hat{\theta}_z}
\newcommand{\thetaxprimex}{\hatthetaz(x)}
\newcommand{\weight}{domain weight\xspace}
\newcommand{\weights}{domain weights\xspace}
\newcommand{\Weights}{Domain weights\xspace}
\newcommand{\algname}{DoReMi\xspace}
\newcommand{\indicator}{\mathbf{1}}
\newlength{\widebarargwidth}
\newlength{\widebarargheight}
\newlength{\widebarargdepth}
\newcommand\R{\ensuremath{\mathbb{R}}} 
\newcommand{\E}{\ensuremath{\mathbb{E}}} 
\begin{document}
\bibliographystyle{plainnat}

\title{DoReMi: Optimizing Data Mixtures Speeds Up Language Model Pretraining}
\author[1,2]{Sang Michael Xie\thanks{Work done while interning at Google DeepMind. Correspondence: <Sang Michael Xie: xie@cs.stanford.edu>, <Hieu Pham: hyhieu@gmail.com>, <Adams Wei Yu: adamsyuwei@google.com>.}}
\author[1]{Hieu Pham}
\author[1]{Xuanyi Dong}
\author[1]{Nan Du}
\author[1]{Hanxiao Liu}
\author[1]{Yifeng Lu}
\author[2]{\\Percy Liang}
\author[1]{Quoc V. Le}
\author[2]{Tengyu Ma}
\author[1]{Adams Wei Yu}

\affil[1]{Google DeepMind}
\affil[2]{Stanford University}

\date{}

\newcommand{\fix}{\marginpar{FIX}}
\newcommand{\new}{\marginpar{NEW}}

\maketitle

\begin{abstract}
The mixture proportions of pretraining data domains (e.g., Wikipedia, books, web text) greatly affect language model (LM) performance. In this paper, we propose Domain Reweighting with Minimax Optimization (DoReMi), which first trains a small proxy model using group distributionally robust optimization (Group DRO) over domains to produce domain weights (mixture proportions) without knowledge of downstream tasks. We then resample a dataset with these domain weights and train a larger, full-sized model. In our experiments, we use DoReMi on a 280M-parameter proxy model to set the domain weights for training an 8B-parameter model (30x larger) more efficiently. On The Pile, DoReMi improves perplexity across \textit{all} domains, even when it downweights a domain. DoReMi improves average few-shot downstream accuracy by 6.5\% points over a baseline model trained using The Pile's default domain weights and reaches the baseline accuracy with 2.6x fewer training steps. On the GLaM dataset, DoReMi, which has no knowledge of downstream tasks, even matches the performance of using domain weights tuned on downstream tasks.
\end{abstract}

\section{Introduction}
\label{sec:intro}
Datasets for training language models (LMs) are typically sampled from a mixture of many domains~\citep{gao2020pile,du2021glam,chowdhery2022palm,brown2020gpt3}.
For example, The Pile~\citep{gao2020pile}, a large publicly available dataset, is composed of 24\% web data, 9\% Wikipedia, 4\% GitHub, etc.\footnote{The \weights, which are based on token count in this paper, varies by tokenizer; see Appendix~\ref{app:training-details}.}
The composition of the pretraining data greatly affects the effectiveness of an LM~\citep{du2021glam,hoffmann2022chinchilla,xie2023data}.
However, it is unclear how much of each domain to include to produce a model that performs well for a wide variety of downstream tasks.

Existing works determine \weights (the sampling probabilities for each domain) by using intuition or a set of downstream tasks.
For example, The Pile uses heuristically-chosen \weights, which could be suboptimal.
On the other hand, existing LMs such as PaLM~\citep{chowdhery2022palm} and GLaM~\citep{du2021glam} tune the \weights based on a set of downstream tasks, but requires training potentially thousands of LMs on different \weights and risks overfitting to the particular set of downstream tasks.

\begin{figure}
\centering
\includegraphics[width=\textwidth]{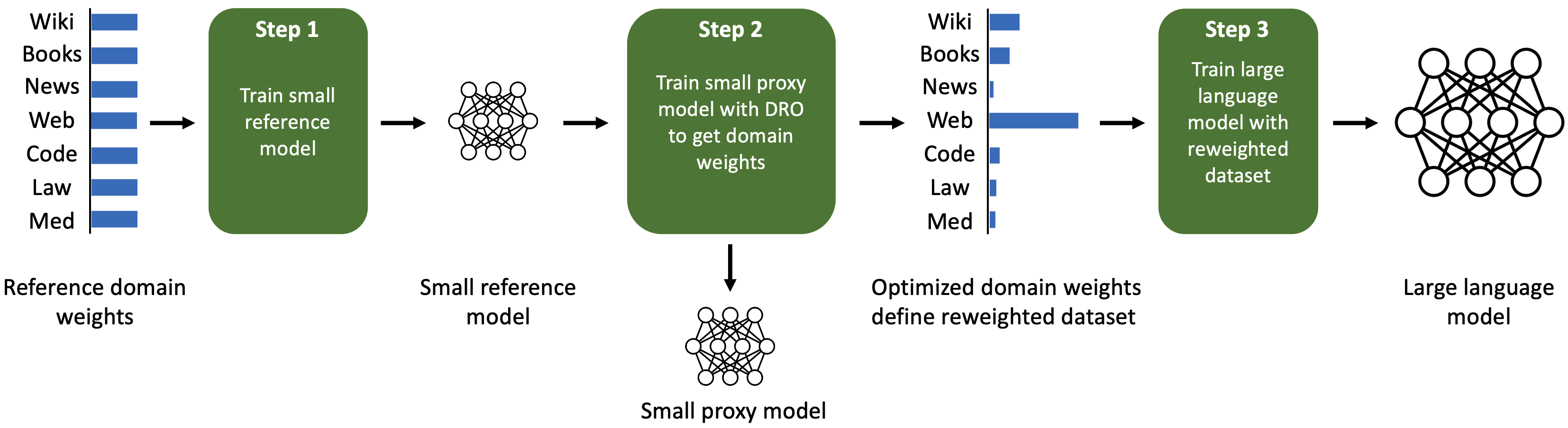}
\caption{Given a dataset with a set of domains,
Domain Reweighting with Minimax Optimization (\algname)
optimizes the \weights to improve language models trained on the dataset.
First, \algname uses some initial reference \weights to train a reference model (Step 1).
The reference model is used to guide the training of a small proxy model using group distributionally robust optimization (Group DRO) over domains~\citep{oren2019drolm,sagawa2020group,nemirovski2009robust}, which we adapt to output \weights instead of a robust model (Step 2).
We then use the tuned \weights to train a large model (Step 3).
}
\label{fig:fig1}
\end{figure}

Instead of optimizing \weights based on a set of downstream tasks, our approach aims to find \weights which lead to models that perform well on all domains by minimizing the worst-case \textit{excess loss} over domains, following~\citet{oren2019drolm,mindermann2022prioritized}.
The excess loss is the loss gap between the model being evaluated and a pretrained reference model.

This motivates our algorithm, \textbf{Do}main \textbf{Re}weighting with \textbf{Mi}nimax Optimization (\algname), which leverages distributionally robust optimization (DRO) to tune the \weights without knowledge of downstream tasks (Figure~\ref{fig:fig1}).
First, \algname trains a small reference model (e.g., 280M parameters) in a standard way.
Second, \algname trains a small distributionally robust language model (DRO-LM) \citep{oren2019drolm}, which minimizes the worst-case excess loss (relative to the reference's model's loss) across all domains.
Notably, \textit{rather than using the robust LM}, we take the \weights produced by DRO training.
Finally, we train a large (8B) LM on a new dataset defined by these \weights.

Our approach adapts the DRO-LM framework~\citep{oren2019drolm} to optimize \weights instead of producing a robust model.
To do this, \algname uses the online learning-based optimizer from Group DRO~\citep{sagawa2020group,nemirovski2009robust}, which dynamically updates \weights according to the loss on each domain for rescaling the training objective, instead of sub-selecting examples from a minibatch as in \citet{oren2019drolm,mindermann2022prioritized}. Finally, \algname takes the averaged \weights over DRO training steps.

\begin{figure}
\centering
\hspace{1.5cm}
\includegraphics[width=0.75\textwidth]{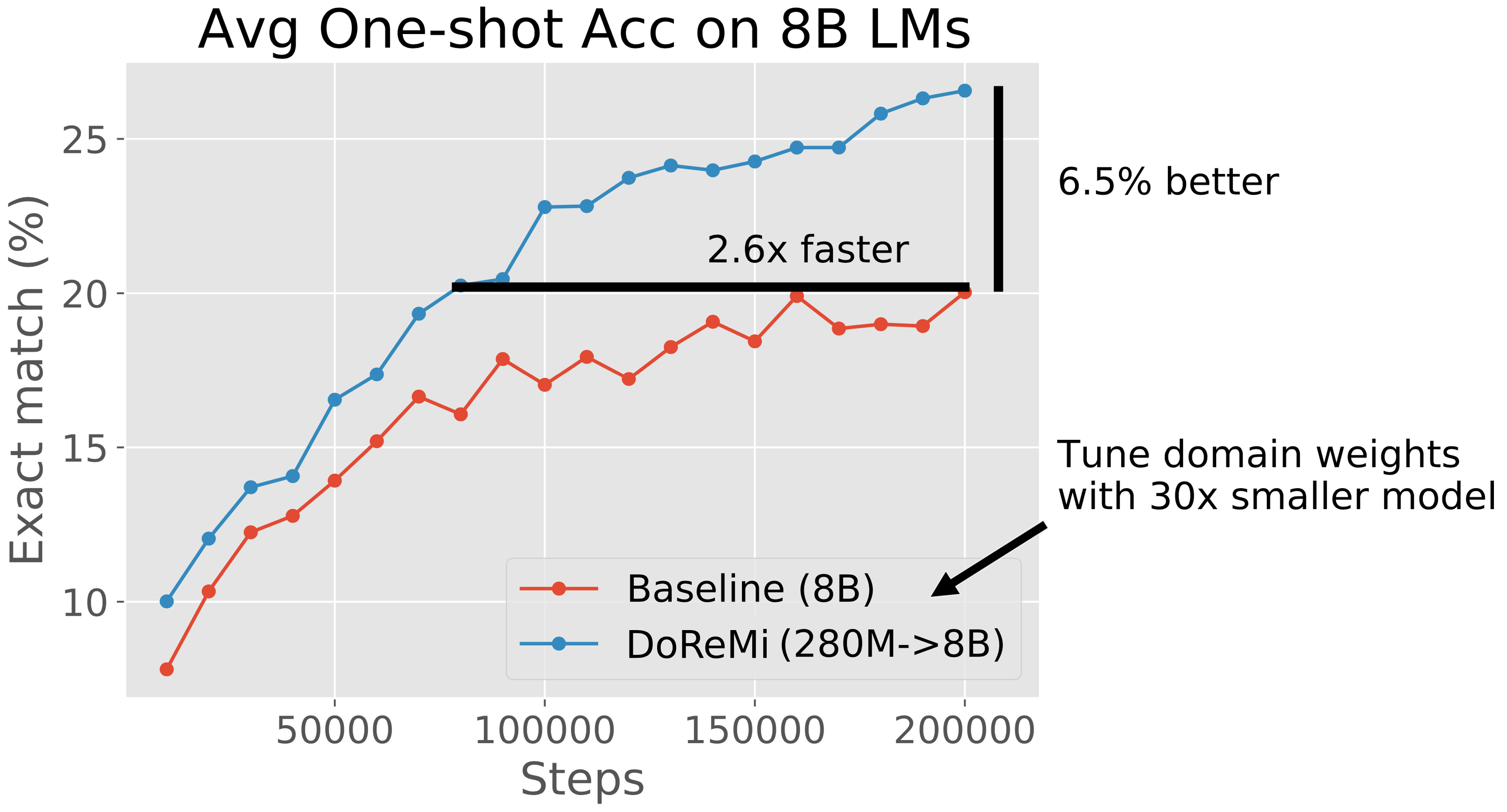}
\caption{\algname optimizes \weights with a small model (280M params) and uses these \weights to train a much larger model (8B params, 30x larger). Here, optimizing the \weights (training a small model twice) takes 8\% of the compute of training the large model.
\algname improves average one-shot downstream accuracy by 6.5\% points and reaches the baseline accuracy 2.6x faster when pretraining on The Pile. }
\label{fig:annotated_improvement}
\end{figure}

In Section~\ref{sec:experiments}, we run \algname on 280M proxy and reference models to optimize \weights on The Pile~\citep{gao2020pile} and the GLaM dataset~\citep{du2021glam} (used in PaLM~\citep{chowdhery2022palm}).
The \algname \weights are used to train an 8B parameter LM (over 30x larger).
On The Pile, \algname reduces perplexity on \textit{all} domains over baseline \weights, even when it downweights a domain.
\algname improves average downstream accuracy over a baseline model trained on The Pile's default \weights by 6.5\% points on generative few-shot tasks and achieves the baseline downstream accuracy 2.6x faster (Figure~\ref{fig:annotated_improvement}). 
In Section~\ref{sec:analysis}, we find that \algname consistently improves LM training when varying the sizes of the proxy model and the main model trained with optimized \weights.
On the GLaM dataset where \weights tuned on downstream tasks are available, \algname even performs comparably to tuning \weights on downstream task performance.\footnote{A public re-implementation of \algname and optimized \weights for The Pile can be found at \url{https://github.com/sangmichaelxie/doremi}.} 
\section{Domain Reweighting with Minimax Optimization (\algname)}
\label{sec:method}
In this section we define \algname, an algorithm for using a small proxy model to optimize the \weights of a language modeling dataset, which then improves the training of a large model.

\paragraph{Setup.}
Suppose that we have $k$ domains (e.g., Wikipedia, GitHub), where for each domain $i$, we have a set of examples $D_i$.
\Weights $\alpha\in \Delta^k$ specify a probability distribution over the $k$ domains, and consequently a distribution over the training data: $P_\alpha = \sum_{i=1}^k \alpha_i \cdot \unif(D_i)$
where $\unif(D) = \frac{1}{|D|} \sum_{x\in D}\delta_x$ is the uniform distribution over the examples in $D$ and $\delta_x(x')$ is 1 if $x'=x$ and 0 otherwise.

\paragraph{\algname.}
The inputs of \algname are the data $D_1,\dots,D_k$, reference \weights $\alpha_{\text{ref}}$ (e.g., uniform or based on raw token count of each domain), and training hyperparameters for the large, full-size model (number of training steps $T$ and batch size $b$). \algname returns optimized \weights $\bar{\alpha}$ and ultimately, a large model trained on $P_{\bar{\alpha}}$.

\paragraph{Step 1: Obtain a small reference model.}
We first train a model $\pref$ on some reference \weights $\alpha_{\text{ref}}$ (e.g., based on raw token count as a default) for $T$ steps, batch size $b$.
This model serves as the reference model for step 2 and captures a baseline level of difficulty of each example/domain.
The reference model can be a relatively small model (280M parameters in our experiments).

\paragraph{Step 2: Train proxy model with Group DRO to obtain \weights.}
To obtain \weights, we train a small \textit{proxy model} $\pproxy$ in the distributionally robust language modeling (DRO-LM)~\citep{oren2019drolm} framework with the Group DRO optimizer~\citep{sagawa2020group}, where $\theta$ are the weights of the proxy model.
This framework trains a robust model by optimizing the worst-case loss over domains, which is equivalent to the following minimax objective:
\begin{align}
\min_\theta \max_{\alpha \in \Delta^k} L(\theta, \alpha) \coloneqq \sum_{i=1}^k \alpha_i \cdot \left[ \frac{1}{\sum_{x\in D_i} |x| }\sum_{x\in D_i} \ell_\theta(x) - \ell_{\text{ref}}(x)\right]
\end{align}
where the losses $\ell_\theta(x) = -\log~\pproxy(x)$ and $\ell_{\text{ref}}(x) = -\log~\pref(x)$ are the negative log-likelihoods of the proxy and reference models respectively in this paper, and $|x|$ is the number of tokens in an example $x$.
The objective aims to minimize the worst-case excess loss across domains because the inner maximization over $\alpha$ puts all the weight on the domain with the highest excess loss.

Intuitively, the excess loss ($\ell_\theta(x) - \ell_{\text{ref}}(x)$) measures the headroom for the proxy model to improve, with respect to the reference model, on example $x$.
Examples with higher excess loss are those where the reference model achieves low loss (such that the example is ``learnable'') but the proxy model still has high loss.
Examples with low excess loss may be very high entropy (i.e. optimal loss is high, and thus the reference loss is high) or very low entropy (i.e., easy to learn, and thus the proxy loss is low).
The Group DRO optimizer works by interleaving exponentiated gradient ascent updates on \weights $\alpha_t$ with gradient updates on the proxy model weights $\theta_t$ over training steps $t$.
The optimizer updates $\alpha_t$ to upweight domains with high excess loss, which scales up the proxy model's gradient update on examples from these domains.
Following \citet{nemirovski2009robust}, we return the average weights over the training trajectory $\bar{\alpha} = \frac{1}{T}\sum_{i=1}^T \alpha_t$ as the optimized \weights to use in step 3.

\paragraph{Step 3: Train large model with new \weights.}
The tuned \weights $\bar{\alpha}$ define a new training distribution $P_{\bar{\alpha}}$. We resample the data from this new distribution to train a main model (larger than the reference/proxy models), using a standard training procedure.

\begin{algorithm}
\caption{\algname domain reweighting (Step 2)}
\label{alg:alg1}
\begin{algorithmic}
\Require Domain data $D_1,\dots, D_k$, number of training steps $T$, batch size $b$, step size $\eta$, smoothing parameter $c \in [0,1]$ (e.g., $c=$1e-3 in our implementation).
\State Initialize proxy weights $\theta_0$
\State Initialize \weights $\alpha_0 = \frac{1}{k}\indicator$
\For{$t$ from 1 to $T$}
    \State Sample minibatch $B=\{x_1,\dots,x_j\}$ of size $b$ from $P_u$, where $u=\frac{1}{k}\indicator$
    \State Let $|x|$ be the token length of example $x$ ($|x|\leq L$)
    \State Compute per-domain excess losses for each domain $i\in \{1,2, \dots, k\}$ ($\ell_{\theta, j}(x)$ is $j$-th token-level loss):\\
        $~~~~~~~~~~~~~\lambda_t[i] \gets \frac{1}{\sum_{x \in B \cap D_i} |x|} \sum_{x \in B \cap D_i}\sum_{j=1}^{|x|}  \max\{\ell_{\theta_{t-1}, j}(x) - \ell_{\text{ref}, j}(x), 0\}$
    \State Update \weights ($\exp$ is entrywise): $\alpha'_t \gets \alpha_{t-1}\exp(\eta \lambda_t)$
    \State Renormalize and smooth \weights: $\alpha_t \gets (1-c)\frac{\alpha'_t}{\sum_{i=1}^k \alpha'_t[i]}  + cu$
    \State Update proxy model weights $\theta_t$ for the objective $L(\theta_{t-1}, \alpha_t)$ (using Adam, Adafactor, etc.)
\EndFor\\
\Return $\frac{1}{T}\sum_{t=1}^T \alpha_t$
\end{algorithmic}
\end{algorithm}

\paragraph{Details for Step 2.}
Algorithm~\ref{alg:alg1} provides the pseudocode for Step 2.
The main structure of Algorithm~\ref{alg:alg1} is a training loop which updates the proxy model over $T$ steps.
At each step, we follow~\citet{sagawa2020group} and sample a minibatch with uniform \weights (regardless of the reference \weights $\alpha_{\text{ref}}$, which only affects the reference model).
We then compute the per-domain excess losses, normalized by the total number of tokens in each domain, and use them to update the \weights $\alpha_t$ at each step.
We first compute the per-domain excess loss at a per-token level and then aggregate, where the token-level losses at index $j$ are $\ell_{\theta_{t-1},j}(x)=-\log p_{\theta_{t-1}}(x_j \mid x_1, \dots, x_{j-1})$ and $\ell_{\text{ref}, j}(x) = -\log~ \pref(x_j \mid x_1,\dots, x_{j-1})$.
Since the Group DRO optimizer~\citep{sagawa2020group} requires a non-negative loss, we clip the per-token excess loss at 0.
Finally, we update the proxy model for the objective $L(\theta_{t-1}, \alpha_t)$ using a standard optimizer such as Adam~\citep{kingma2015adam} or Adafactor~\citep{shazeer2018adafactor}. All experiments in this paper use Adafactor.
We set the \weight update step size to $\eta=1$ and the smoothing parameter to $c=$1e-3 in all our experiments and did not extensively tune these hyperparameters.

\paragraph{Iterated \algname.}
We extend \algname by running it for multiple rounds, setting the reference \weights $\alpha_{\text{ref}}$ for the next round to be $\bar{\alpha}$ from the previous round.
We call this \textit{iterated \algname}.
The entire iterated process still only uses small models for tuning \weights.
We stop iterating when the \weights converge, which we define as when maximum change in any \weight $\|\bar{\alpha} - \alpha_{\text{ref}}\|_\infty$ is less than 1e-3. 
Empirically, this takes only 3 rounds on the GLaM dataset (Section~\ref{sec:main-results-8B}).
\section{\algname Improves LM Training Efficiency and Performance}

\label{sec:experiments}
In this section, we use \algname \weights optimized with a 280M-parameter proxy model to train a 8B-parameter main model (30x larger).
We consider two datasets, The Pile~\citep{gao2020pile} and the GLaM dataset~\citep{du2021glam}.
On The Pile, \algname reduces perplexity significantly on every domain, improves average downstream accuracy on generative one-shot tasks by 6.5\%, and achieves the baseline accuracy 2.6x faster.
On the GLaM dataset where \weights tuned on downstream datasets are available, \algname finds \weights with comparable performance to downstream-tuned \weights.

\subsection{Experimental setup}
\paragraph{The Pile dataset.}
The Pile~\citep{gao2020pile} is a 800GB text dataset with 22 domains (Table~\ref{tab:pile-mixture-weights}).
The default \weights were determined heuristically.
We use the default \weights from The Pile dataset to train the baseline and as the reference \weights $\alpha_{\text{ref}}$ in \algname (see Appendix~\ref{app:training-details}).

\paragraph{GLaM dataset.}
The GLaM dataset~\citep{du2021glam} (also used in training PaLM~\citep{chowdhery2022palm}) includes text from 8 domains (Table~\ref{tab:glam-mixture-weights}).
For comparison, the GLaM \weights (downstream-tuned) were tuned according to the downstream performance of models trained on each domain and the size of each domain~\citep{du2021glam}.
We consider this an oracle comparison, since these \weights are tuned on downstream tasks that are in our evaluation set.
We use uniform \weights both for training the baseline and the reference \weights $\alpha_{\text{ref}}$ for \algname.

\paragraph{Training setup.}
We train Transformer~\citep{vaswani2017attention} decoder-only LMs with the standard next-token language modeling loss.
We conduct a controlled comparison by equalizing the amount of compute, measured by the number of tokens processed during training.
For The Pile, we train each model for 200k steps; for the GLaM dataset, we train each model for 300k steps.
All models use a batch size of 512 and maximum token length of 1024.
The proxy and reference models have 280M parameters.
All models are trained from scratch (other hyperparameters are in Appendix~\ref{app:training-details}).

\paragraph{Evaluation.}
We use held-out validation data to measure the perplexity on each domain.
For downstream evaluation, we use the generative one-shot tasks from the GPT-3 paper~\citep{brown2020gpt3}: TriviaQA~\citep{joshi2017triviaqa}, NaturalQuestions~\citep{kwiatkowski2019natural}, WebQuestions~\citep{berant2013freebase}, SQuADv2~\citep{rajpurkar2018squadrun}, and LAMBADA~\citep{paperno2016lambada}.
We use the standard exact-match accuracy metric for the these datasets.
The performance on these datasets (particularly TriviaQA) has been shown to correlate well with model scale even at the 100M--1B range~\citep{brown2020gpt3}.

\paragraph{Compute used for optimizing \weights.}
We train two 280M models (the reference and proxy models) to optimize the \weights.
This is 8\% of the FLOPs required to train the main 8B model. All FLOPs come from standard forward and backward passes.

\paragraph{Notation for model sizes in \algname.}
We denote the size of the reference/proxy models (which are always the same size in our experiments) and the size of the main model trained with \algname \weights as ``\algname (size of reference/proxy$\rightarrow$size of main model)'': for example, \algname (280M$\rightarrow$8B). When we are discussing the optimized \weights independently of the main model, we only include one number (e.g., \algname (280M)) which refers to the reference/proxy model size.
\begin{figure}[b]
\centering
\begin{subfigure}{0.49\textwidth}
\centering
\includegraphics[width=\textwidth]{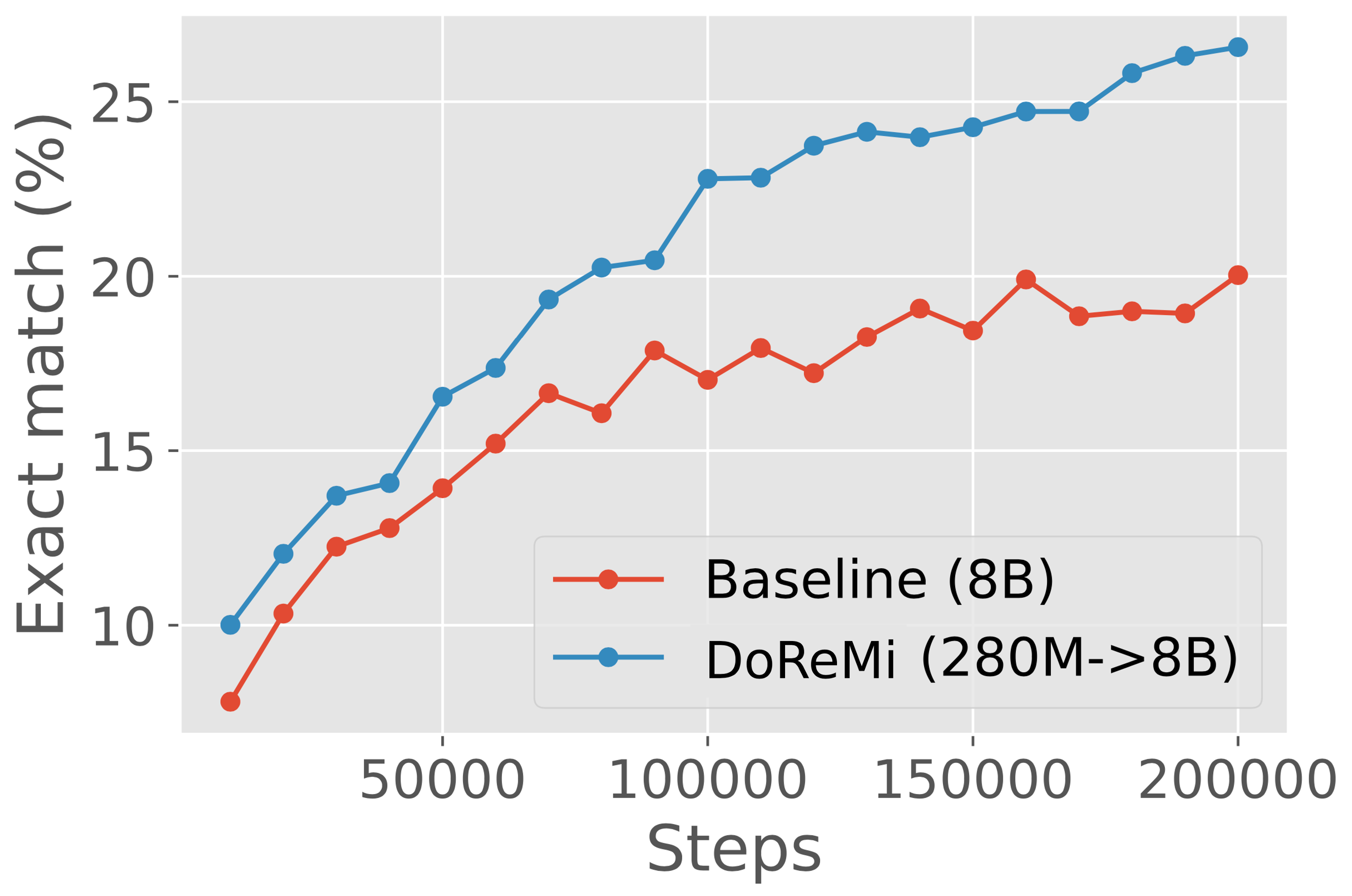}
\caption{The Pile}
\end{subfigure}
\hfill
\begin{subfigure}{0.49\textwidth}
\centering
\includegraphics[width=\textwidth]{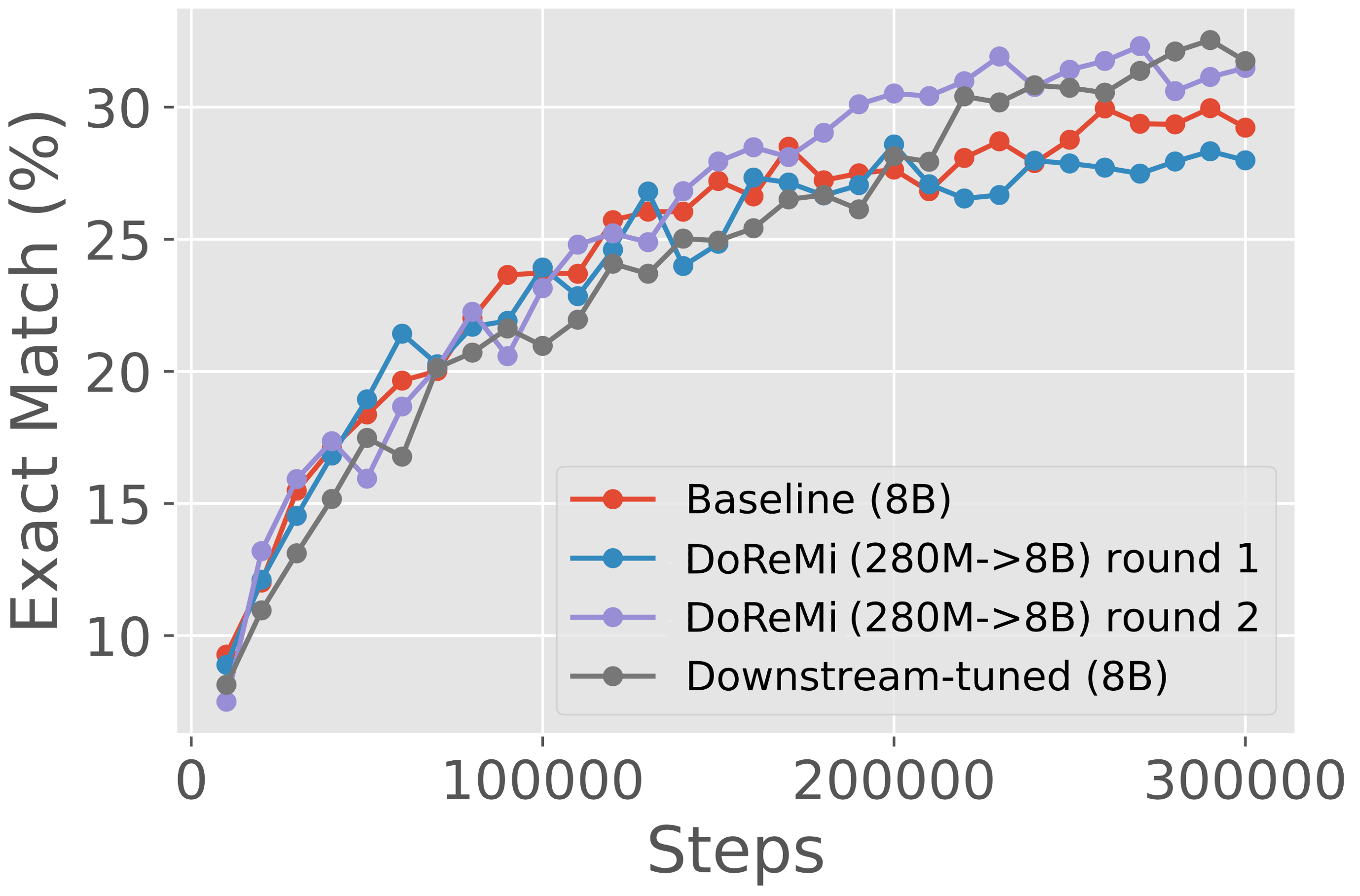}
\caption{GLaM dataset}
\end{subfigure}
\caption{Average one-shot downstream accuracy (exact match) on 5 tasks, with 8B parameter models trained on The Pile (left) and the GLaM dataset (right). On The Pile, \algname improves downstream accuracy by 6.5\% points and achieves the baseline accuracy 2.6x faster (same plot as Figure~\ref{fig:annotated_improvement}).
On the GLaM dataset, iterated \algname (round 2) attains comparable performance to oracle \weights tuned with downstream tasks that are in our evaluation set.
}
\label{fig:280m-8b-fewshot}
\end{figure}

\begin{figure}
\centering
\includegraphics[width=0.88\textwidth]{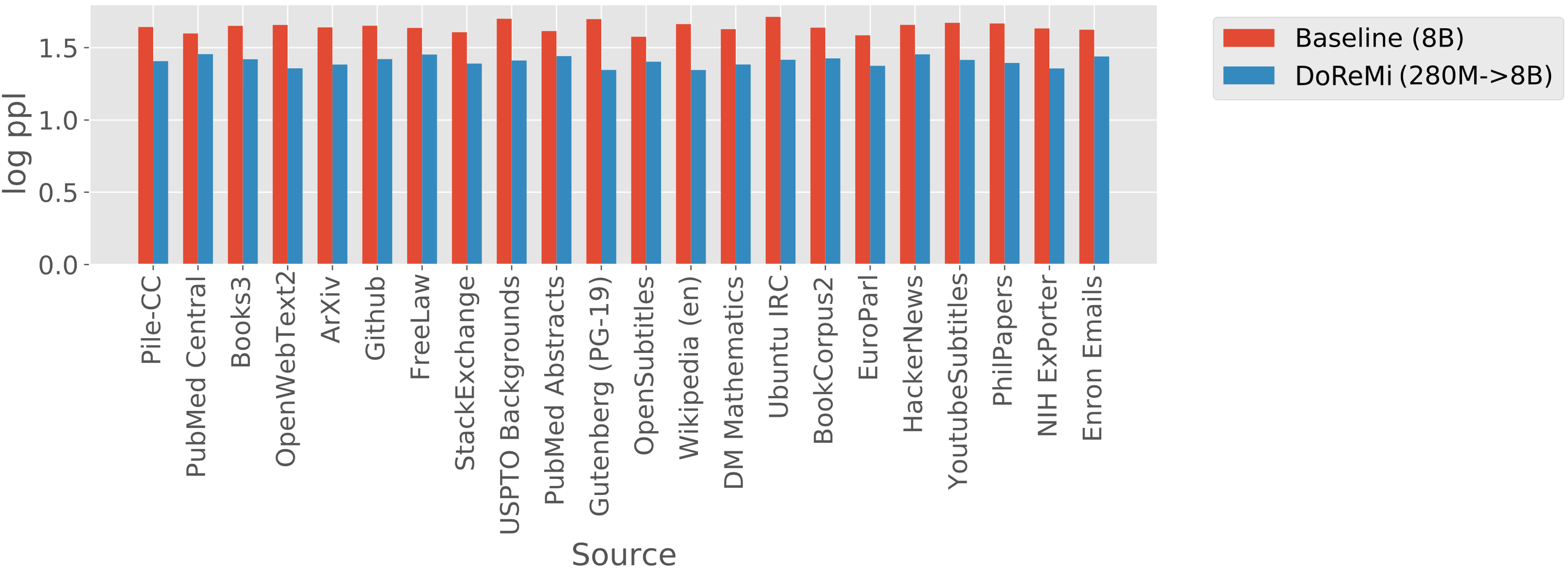}
\caption{Per-domain log-perplexity of 8B models on The Pile. Despite downweighting some domains, \algname improves log-perplexity on all domains.}
\label{fig:280m-8b-pile-perplexity}
\end{figure}

\subsection{\algname improves perplexity and downstream accuracy}
\label{sec:main-results-8B}
We show that \algname significantly improves both the perplexity and downstream accuracy of 8B models trained on The Pile and the GLaM dataset over their respective baseline \weights.

\paragraph{Downstream accuracy improves on The Pile.}
Figure~\ref{fig:280m-8b-fewshot} (left) shows the average downstream performance for baseline and \algname (280M$\rightarrow$8B) models on The Pile. \algname improves the downstream accuracy by 6.5\% points and achieves the baseline accuracy within 75k steps --- 2.6x faster than the baseline (200k steps).
Thus, \algname can dramatically speed up training and improve downstream performance.

\paragraph{\algname can reduce perplexity across all domains without a tradeoff.}
Figure~\ref{fig:280m-8b-pile-perplexity} shows the per-domain log-perplexity of the 8B models on The Pile.
\algname significantly reduces the perplexity over the baseline across \textit{all} domains, despite allocating lower weight to some domains.
How can this occur?
One hypothesis is that the domains with the lowest and highest entropy can be downweighted without impacting the perplexity much.
The lowest entropy domains statistically require few samples to learn.
The highest entropy domains have token distributions that are close to common uniform priors --- for example, models at random initialization tend to output a uniform next token distribution. Thus, we need less samples to fit these domains. Positive transfer from allocating more samples to medium entropy domains can then improve perplexity on all domains.
In Appendix~\ref{sec:simple-example}, we provide a simple example where reweighting domains can improve perplexity on all domains and \algname finds such \weights in simulations.

\begin{table}[t]
\caption{\Weights on The Pile. Baseline \weights are computed from the default Pile dataset. \algname (280M) uses a 280M proxy model to optimize the \weights.
}
\label{tab:pile-mixture-weights}
\centering
\begin{adjustbox}{max width=0.432\textwidth}
\begin{tabular}{lrrrr}
\toprule
                Domain  & Baseline & \algname (280M) & Difference \\
                  \midrule
Pile-CC           & 0.1121            & 0.6057                 & {\color[HTML]{38761D} +0.4936}           \\
YoutubeSubtitles  & 0.0042            & 0.0502                 & {\color[HTML]{38761D} +0.0460}           \\
PhilPapers        & 0.0027            & 0.0274                 & {\color[HTML]{38761D} +0.0247}           \\
HackerNews        & 0.0075            & 0.0134                 & {\color[HTML]{38761D} +0.0059}           \\
Enron Emails      & 0.0030            & 0.0070                 & {\color[HTML]{38761D} +0.0040}           \\
EuroParl          & 0.0043            & 0.0062                 & {\color[HTML]{38761D} +0.0019}           \\
Ubuntu IRC        & 0.0074            & 0.0093                 & {\color[HTML]{38761D} +0.0019}           \\
BookCorpus2       & 0.0044            & 0.0061                 & {\color[HTML]{38761D} +0.0017}           \\
NIH ExPorter      & 0.0052            & 0.0063                 & {\color[HTML]{38761D} +0.0011}           \\
OpenSubtitles     & 0.0124            & 0.0047                 & {\color[HTML]{CC0000} -0.0077}          \\
Gutenberg (PG-19) & 0.0199            & 0.0072                 & {\color[HTML]{CC0000} -0.0127}          \\
\bottomrule
\end{tabular}
\end{adjustbox}
\begin{adjustbox}{max width=0.448\textwidth}
\begin{tabular}{lrrrr}
\toprule
                Domain  & Baseline & \algname (280M) & Difference\\
                  \midrule
DM Mathematics    & 0.0198            & 0.0018                 & {\color[HTML]{CC0000} -0.0180}          \\
Wikipedia (en)    & 0.0919            & 0.0699                 & {\color[HTML]{CC0000} -0.0220}          \\
OpenWebText2      & 0.1247            & 0.1019                 & {\color[HTML]{CC0000} -0.0228}          \\
Github            & 0.0427            & 0.0179                 & {\color[HTML]{CC0000} -0.0248}          \\
FreeLaw           & 0.0386            & 0.0043                 & {\color[HTML]{CC0000} -0.0343}          \\
USPTO Backgrounds & 0.0420            & 0.0036                 & {\color[HTML]{CC0000} -0.0384}          \\
Books3            & 0.0676            & 0.0224                 & {\color[HTML]{CC0000} -0.0452}          \\
PubMed Abstracts  & 0.0845            & 0.0113                 & {\color[HTML]{CC0000} -0.0732}          \\
StackExchange     & 0.0929            & 0.0153                 & {\color[HTML]{CC0000} -0.0776}          \\
ArXiv             & 0.1052            & 0.0036                 & {\color[HTML]{CC0000} -0.1016}          \\
PubMed Central    & 0.1071            & 0.0046                 & {\color[HTML]{CC0000} -0.1025}                     \\
\bottomrule
\end{tabular}
\end{adjustbox}
\end{table}

\begin{table}[t]
\vspace{-0.1in}
\caption{\Weights in the GLaM dataset. Iterated \algname (280M) converges within 3 rounds, with a similar overall pattern to \weights tuned on downstream tasks.
}
\label{tab:glam-mixture-weights}
\centering
\begin{adjustbox}{max width=0.6\textwidth}
\begin{tabular}{lrrrr}
\toprule
                  & Round 1 & Round 2 & Round 3 & Downstream-tuned \\ \midrule
Wikipedia         & 0.09                      & 0.05                      & 0.05                      & 0.06             \\
Filtered webpages & 0.44                      & 0.51                      & 0.51                      & 0.42             \\
Conversations     & 0.10                      & 0.22                      & 0.22                      & 0.27             \\
Forums            & 0.16                      & 0.04                      & 0.04                      & 0.02             \\
Books             & 0.11                      & 0.17                      & 0.17                      & 0.20             \\
News              & 0.10                      & 0.02                      & 0.02                      & 0.02            \\
\bottomrule
\end{tabular}
\end{adjustbox}
\end{table}

\paragraph{Iterated \algname achieves performance of downstream-tuned weights on the GLaM dataset.}
We employ iterated \algname on the GLaM dataset over 3 rounds. We find that the second and third round \weights are almost identical (Table~\ref{tab:glam-mixture-weights}).
 Figure~\ref{fig:280m-8b-fewshot} (right) shows one-shot results for the first two rounds of iterated \algname.
After the first round, the \algname main model has comparable downstream accuracy to the baseline (uniform \weights).
After the second round, the \algname main model achieves comparable downstream accuracy to oracle \weights tuned on downstream tasks in our evaluation set.
Overall, domain reweighting has a smaller effect on GLaM, possibly because there are only 8 domains compared to 22 in The Pile.

\paragraph{Inspecting the \algname \weights.}
Tables~\ref{tab:pile-mixture-weights} and~\ref{tab:glam-mixture-weights} present the \algname \weights for The Pile and the GLaM dataset.
When running \algname on a 280M proxy model (\algname (280M)), most weight is put on the diverse Pile-CC web text domain. Note that Wikipedia is downweighted in comparison to the baseline, but \algname still improves the downstream accuracy on tasks derived from Wikipedia (e.g., TriviaQA, Appendix Table~\ref{tab:pile-8b-pertask-generative}). \Weights for a 1B proxy model (Appendix~\ref{tab:pile-mixture-weights-1B}) shows a different trend, where OpenWebText is the mostly upweighted instead of Pile-CC. This suggests that there may be multiple possible local minima in the \weight space.
On the GLaM dataset, the \algname weights have the same general pattern as the downstream-tuned \weights. \algname is able to recover a similar set of \weights by starting from uniform initial reference \weights, without any use of downstream data.

\section{Ablations and Analysis Across Scales}
\label{sec:analysis}
Previously in Section~\ref{sec:experiments}, we showed that \algname finds \weights using 280M models that can improve training of 8B models.
In this section, we conduct an analysis of \algname where we vary the scale of the proxy model in relation to the main model and ablate the components of the excess loss objective.

\begin{figure}
\centering
\begin{subfigure}{0.47\textwidth}
\centering
\includegraphics[width=\textwidth]{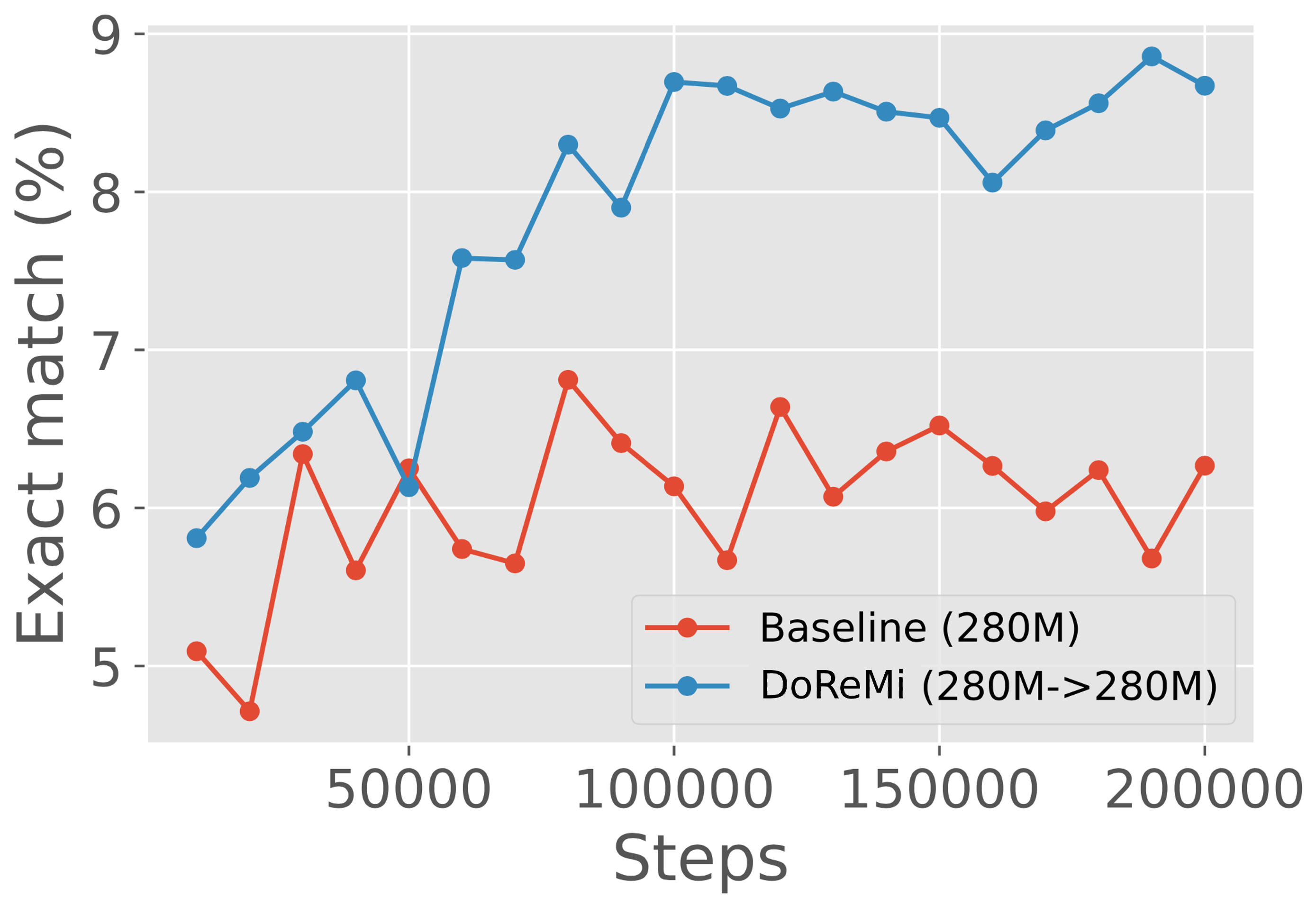}
\end{subfigure}
\hfill
\begin{subfigure}{0.47\textwidth}
\centering
\includegraphics[width=\textwidth]{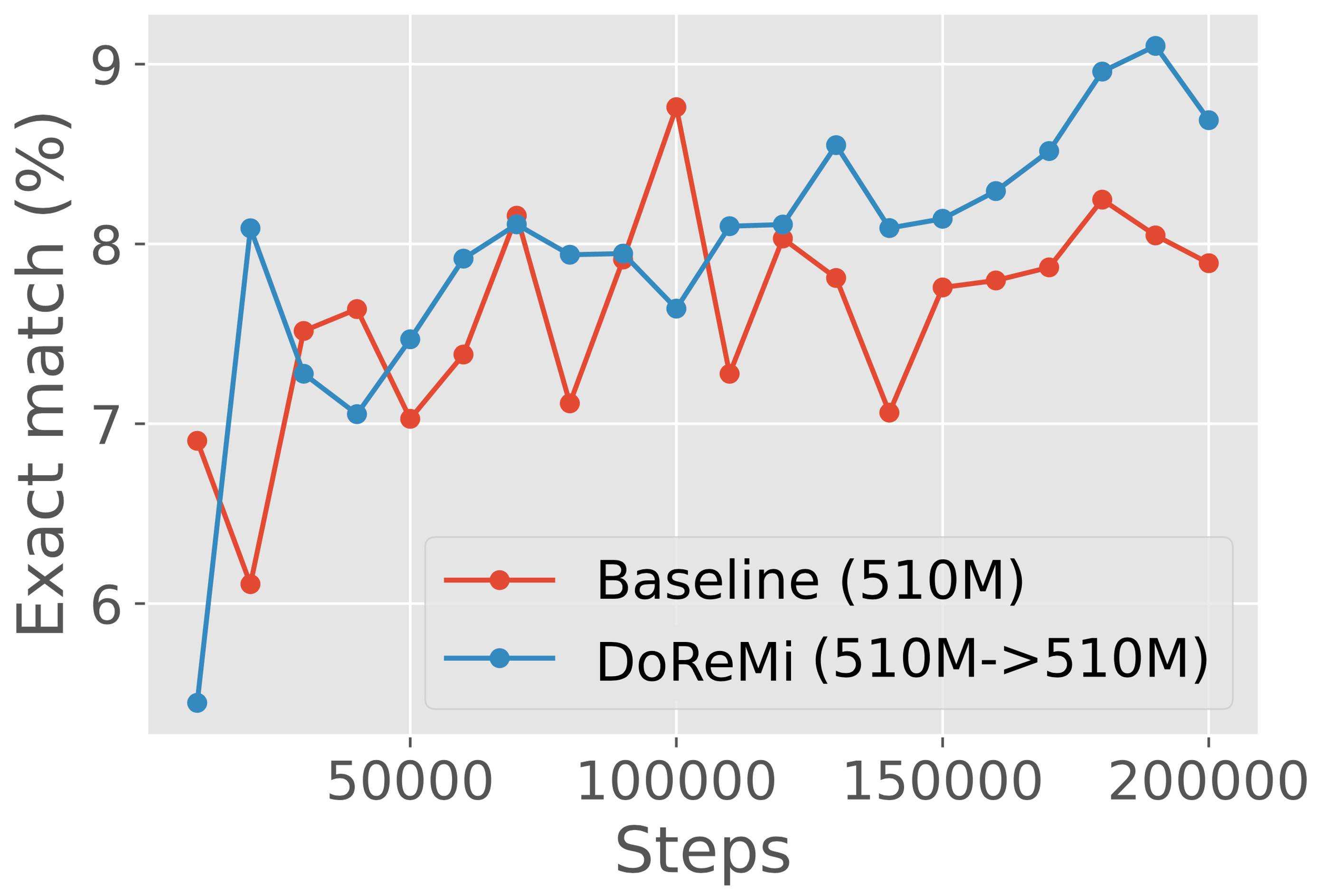}
\end{subfigure}
\hfill
\begin{subfigure}{0.47\textwidth}
\centering
\includegraphics[width=\textwidth]{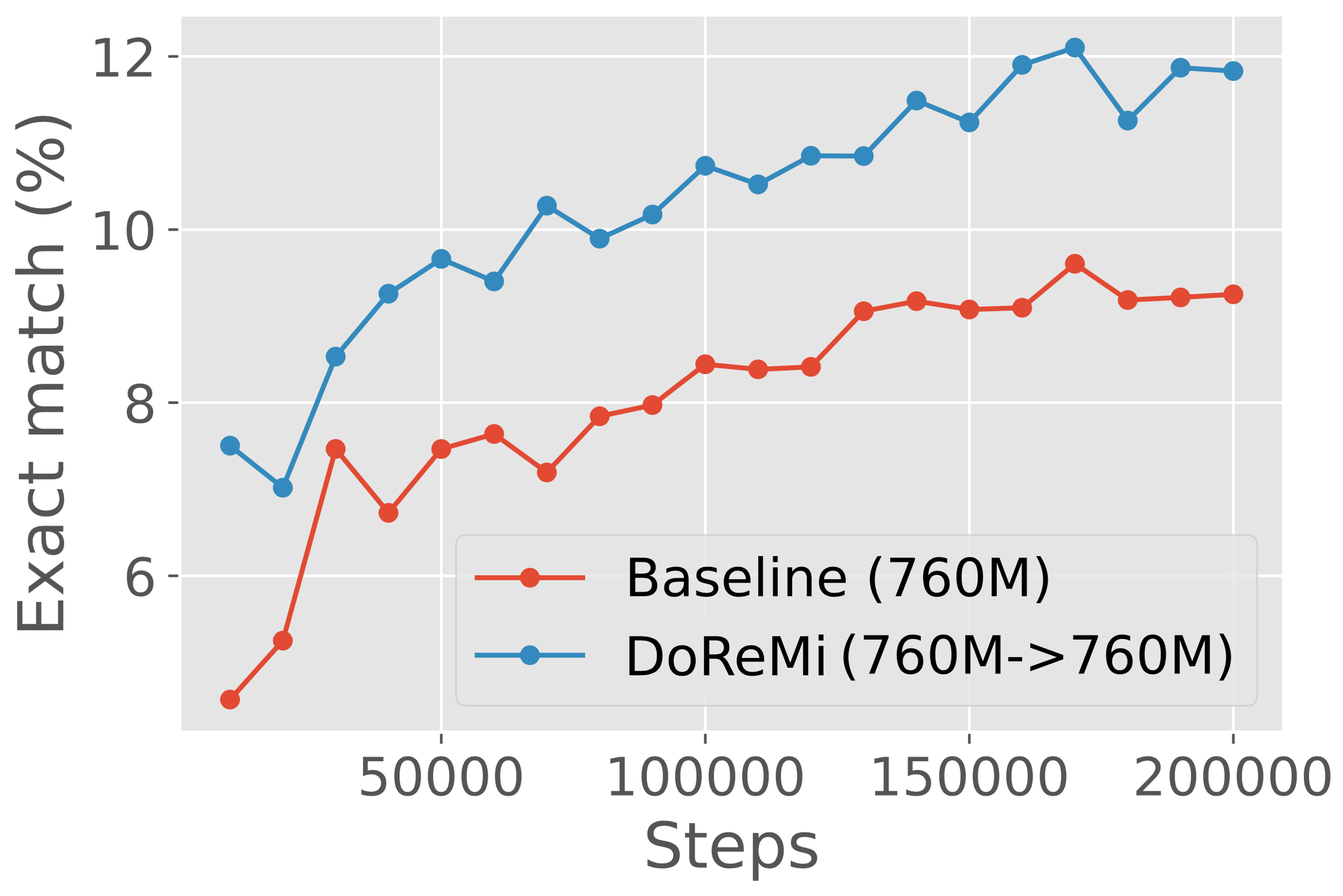}
\end{subfigure}
\hfill
\begin{subfigure}{0.47\textwidth}
\centering
\includegraphics[width=\textwidth]{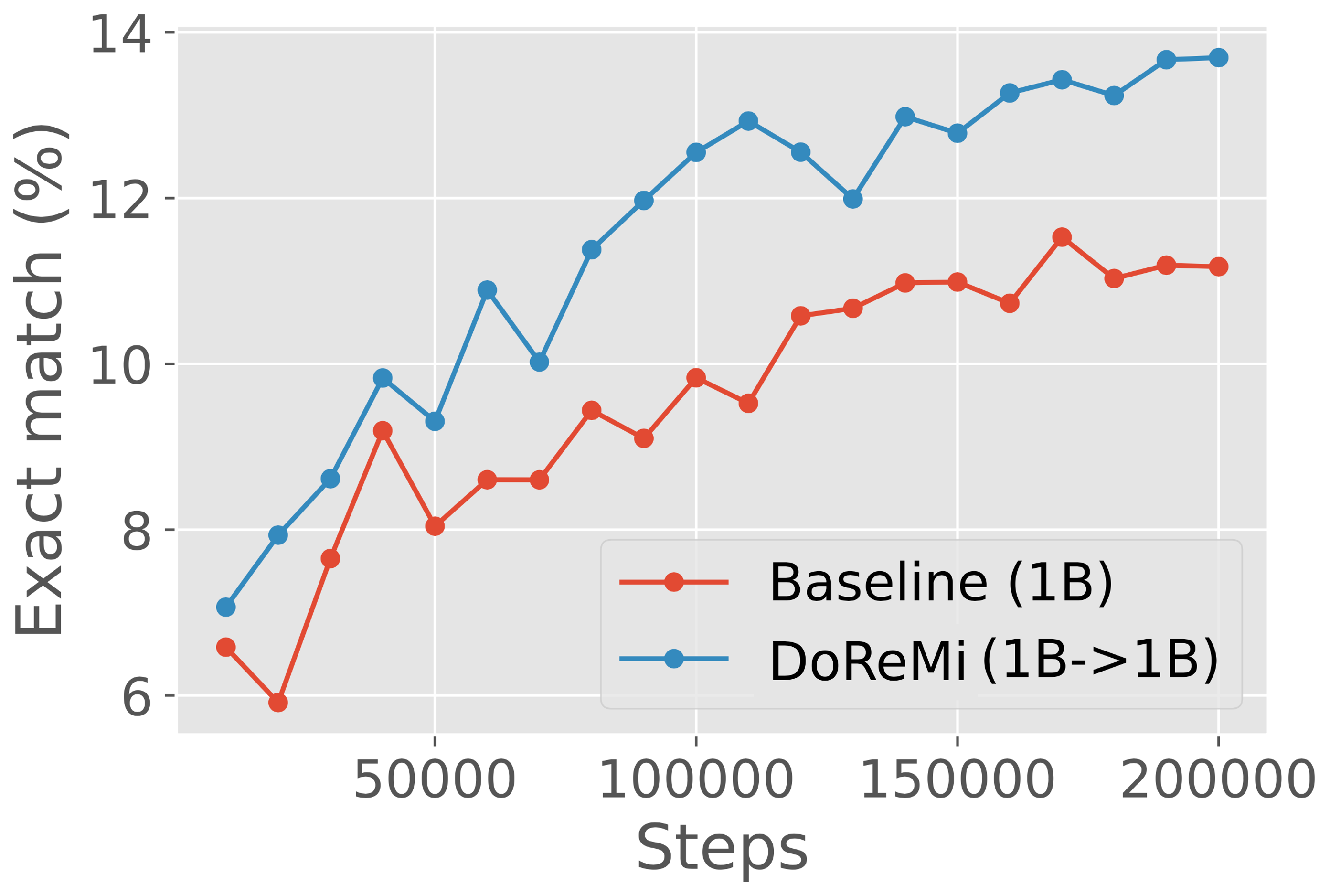}
\end{subfigure}
\caption{Average one-shot downstream accuracy across 4 model scales (280M, 510M, 760M, 1B) where the reference/proxy models for \algname are the same size as the main model trained with \algname \weights. \algname consistently improves downstream accuracy across scales, with a similar 3\% accuracy gap at 200k steps at most scales (except for 510M). \algname achieves the baseline accuracy 4x faster on average across scales.
}
\label{fig:pile-scaling}
\end{figure}

\paragraph{\algname improves LMs consistently across scales.}
We consider using proxy and main models of the same size to analyze \algname's behavior in a simple setting, without the need for the \weights to generalize across scales.
Note that this is just for scientific purposes since this does not save compute in practice.
In particular, we run \algname (X$\rightarrow$X) where X is 280M, 510M, 760M, or 1B on The Pile.
Figure~\ref{fig:pile-scaling} shows that \algname consistently improves downstream accuracy over the baseline by 2\% and achieves the baseline accuracy 4x faster on average across scales, and this improvement does not shrink with larger model size.
\algname improves the worst-case perplexity on all scales and improves 18 of 22 individual domain perplexities on average across scales (Appendix Table~\ref{tab:scaling-perplexity-pile}).
These experiments give a rough picture of how much is lost when using a smaller proxy model; our \algname(280M$\rightarrow$8B) model achieves the baseline accuracy 2.6x faster, while matching the proxy and main model sizes results in a 4x average speedup.

\paragraph{Proxy model underperforms main model, especially at larger sizes.}
Recall that \algname uses Group DRO to train a proxy model, which reweights the objective with the \weights.
In contrast, the main model is trained by resampling on the \weights from \algname. 
When the proxy model and the main model are the same size, which one is the better model?
Table~\ref{tab:pile-perplexity-drolm} shows that the proxy model typically underperforms the main model in this case.
The gap between the proxy and main model increases with scale, as the 1B proxy model not only underperforms the 1B main model but also the 1B baseline model, while the 280M proxy model achieves better perplexity than the 280M baseline model on 19/22 domains.
Despite the relatively poor quality of the 1B proxy model, the \weights still allow the 1B main model to achieve the baseline performance over 2x faster.
This suggests that \algname can succeed even if the proxy model is not trained well.
However, we hypothesize that the mismatch between the proxy and main model training (loss reweighting vs. resampling) explains their performance difference and therefore a resampling-based Group DRO optimizer may improve \algname for larger proxy models.

\begin{figure}
\centering
\begin{subfigure}{0.47\textwidth}
\centering
\includegraphics[width=\textwidth]{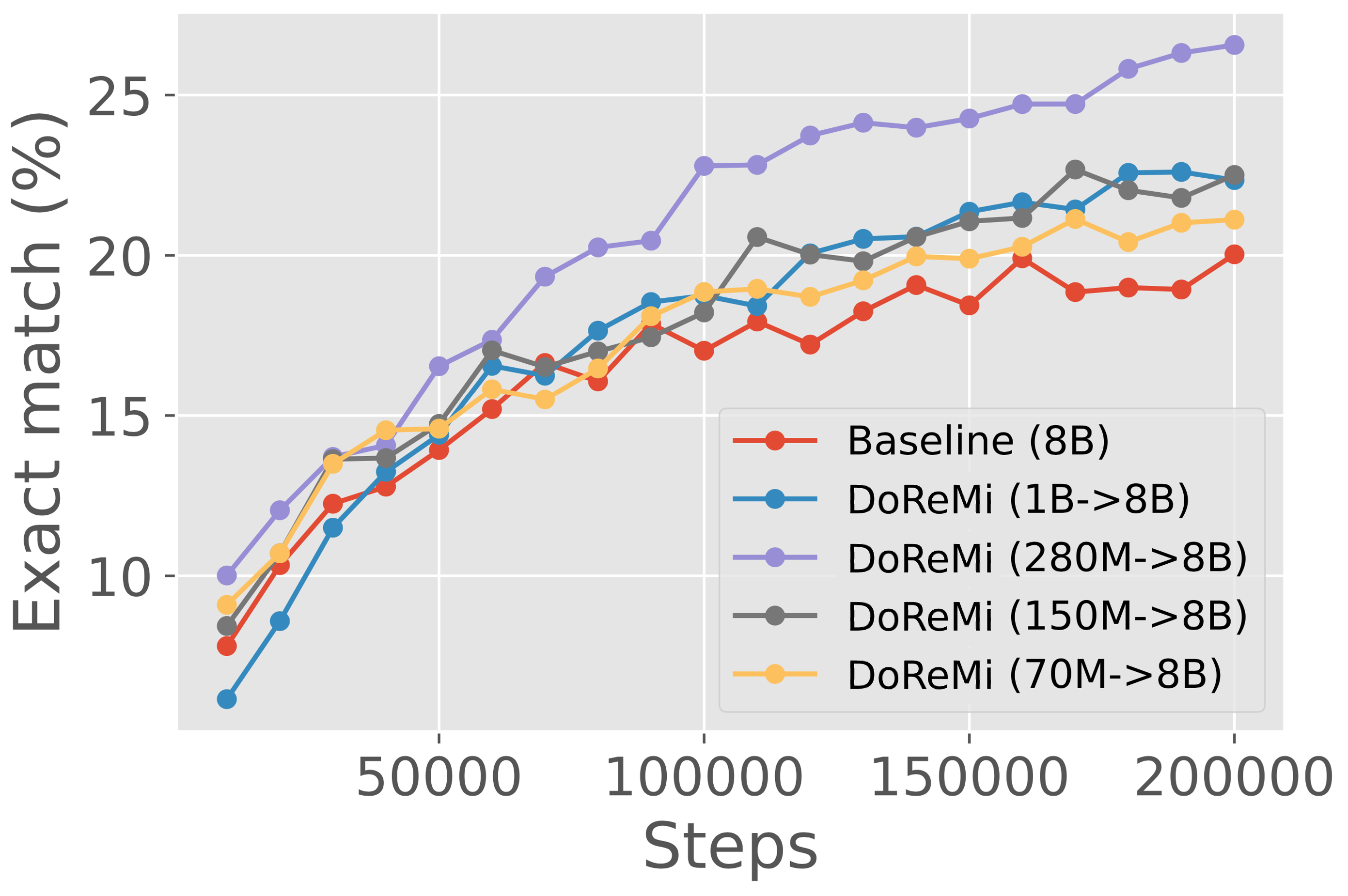}
\end{subfigure}
\hfill
\begin{subfigure}{0.48\textwidth}
\centering
\includegraphics[width=\textwidth]{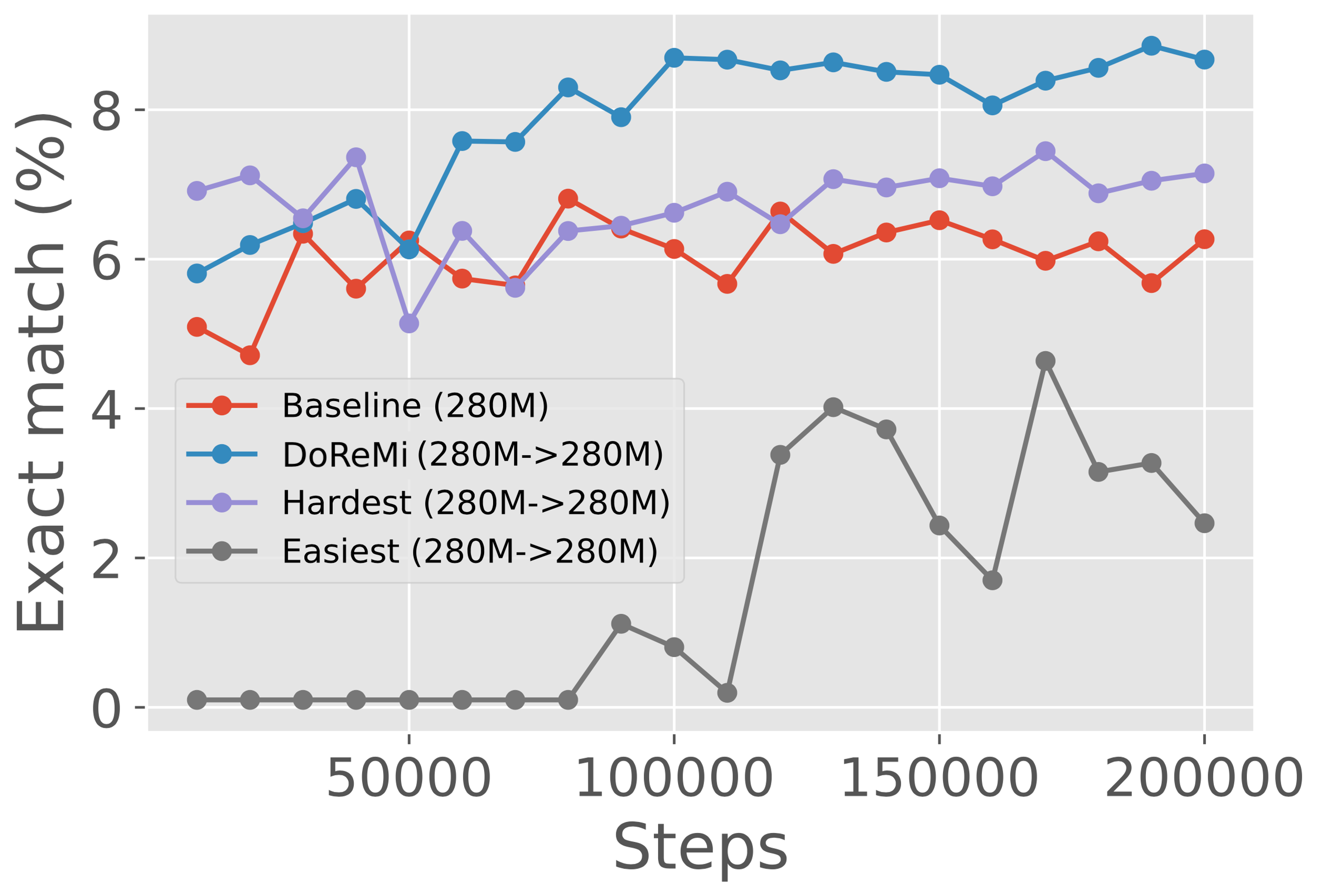}
\end{subfigure}
\caption{Average downstream accuracy for models trained on The Pile. \textbf{(Left)} Increasing the size of the reference/proxy models from 70M to 280M in \algname improves downstream accuracy for a 8B main model, but the trend does not continue for the 1B proxy model. We hypothesize that the Group DRO optimizer is worse for larger proxy models. \textbf{Right)} Optimizing for the hardest or easiest domains rather than excess loss (which combines both) do not achieve the same average downstream accuracy as \algname (280M models). }
\label{fig:pile-ablations}
\end{figure}

\begin{table}[tbp]
\caption{Summary of per-domain log-perplexities on The Pile (22 total domains). Average log-perplexity is an unweighted average of the per-domain log-perplexities.}
\label{tab:perplexity-summaries}
\begin{subfigure}{0.48\textwidth}
\caption{Varying the size of the proxy/reference model and training at 8B.}
\label{tab:pile-8B-scaling}
\centering
\begin{adjustbox}{max width=\textwidth}
\begin{tabular}{lrrr}
\toprule
                & \multirow{2}{*}{\shortstack[r]{Worst-case\\ log-ppl}} & \multirow{2}{*}{\shortstack[r]{Avg log-ppl}} & \multirow{2}{*}{\shortstack[r]{\# domains \\ beating baseline}} \\ 
                & & & \\
                \midrule
Baseline (8B)        & 1.71            & 1.64     & 0/22                          \\
\algname (70M->8B)  & 1.63            & 1.53     & 22/22                         \\
\algname (150M->8B) & 1.56            & 1.52     & 22/22                         \\
\algname (280M->8B) & 1.46            & 1.40      & 22/22                         \\
\algname (1B->8B)   & 1.58            & 1.54     & 22/22                         \\ \bottomrule
\end{tabular}
\end{adjustbox}
\end{subfigure}
\hfill
\begin{subfigure}{0.48\textwidth}
\caption{Perplexity of the \algname main model and proxy model of the same size. Although the 1B proxy model is relatively poor quality, the resulting \weights still improve the main model.
}
\label{tab:pile-perplexity-drolm}
\centering
\begin{adjustbox}{max width=\textwidth}
\begin{tabular}{lrrr}
\toprule
                & \multirow{2}{*}{\shortstack[r]{Worst-case\\ log-ppl}} & \multirow{2}{*}{\shortstack[r]{Avg log-ppl}} & \multirow{2}{*}{\shortstack[r]{\# domains \\ beating baseline}} \\ 
              & & & \\
              \midrule
Baseline (280M)          & 2.39            & 2.32     & 0/22                          \\
\algname (280M->280M) & 2.19            & 2.13     & 22/22                         \\
Proxy (280M)      & 2.33            & 2.27     & 19/22            \\
\midrule
Baseline (1B)      & 1.94            & 1.87     & 0/22                          \\
\algname (1B->1B) & 1.92            & 1.83     & 19/22                         \\
Proxy (1B)  & 2.11            & 2.02     & 0/22   \\
 \bottomrule
\end{tabular}
\end{adjustbox}

\end{subfigure}
\end{table}

\paragraph{Effect of proxy model scale on larger main model's performance.}
We consider 70M, 150M, 280M, and 1B scales for the \algname proxy model while fixing the main model size at 8B (\algname(X$\rightarrow$8B)).
From 70M to 280M, increasing the proxy model size improves downstream accuracy at 8B (Figure~\ref{fig:pile-ablations} left).
We hypothesize that this trend does not continue for the 1B proxy model because the Group DRO optimizer is worse at larger scales (Table~\ref{tab:pile-perplexity-drolm}).
While \algname (280M$\rightarrow$8B) results in the most improvement at 8B, \algname (150M$\rightarrow$8B) and \algname (1B$\rightarrow$8B) still achieve the baseline accuracy almost 2x faster. This suggests that \algname is robust to the proxy model scale. In practice, we suggest choosing a relatively small proxy model size (280M) to save compute.

\paragraph{Choosing the easiest or hardest domains do not suffice.}
We ablate the components of the excess loss metric $\ell_\theta(x) - \ell_{\text{ref}}(x)$  by running \algname using only the loss of the proxy model $\pproxy$ on example $x$, i.e. $\ell_\theta(x)$ (prefer hardest domains for the proxy model) or only the negative loss of the reference $-\ell_{\text{ref}}(x)$ (prefer easiest domains for the reference model). 
Figure~\ref{fig:pile-ablations} (right) shows that neither of the components of the excess loss alone are sufficient to achieve the gains of \algname.
\section{Related Work}
\label{sec:related}

\paragraph{Curating pretraining data for LMs.}
Most closely related is the GLaM dataset~\citep{du2021glam} (also used for training PaLM~\citep{chowdhery2022palm}), which has \weights that are tuned using downstream data.
Optimizing \weights for downstream tasks can be expensive and could require search/zero-order optimization~\citep{snoek12hyper}, RL~\citep{zoph2016neural}, or heuristic assumptions on how positive/negative transfer between domains work.
Example-level filtering also brings benefits for LM training.
The C4 dataset~\citep{raffel2019exploring} shows gains over CommonCrawl via heuristic data cleaning methods.
\citet{du2021glam,xie2023data} show that filtering the data at an example level for high-quality text that look like Wikipedia and books can significantly improve downstream performance for LMs.
In contrast to these works, \algname sets \weights automatically with only two small LM training runs and does not make assumptions about the type of data to prefer (Wikipedia-like, etc.).

\paragraph{General data selection methods.}
Moore-Lewis selection~\citep{moore2010intelligent,axelrod2017cynical,feng2022automatic} selects examples with high cross-entropy difference (similar to excess log-perplexity) between language models trained on target and raw data. In contrast, \algname reweights the data without a target distribution.
\citet{coleman2020selection} select examples based on the uncertainty of a small proxy model for active learning, while \algname uses DRO on the excess loss with respect to a reference model, and focuses on data mixture reweighting.
\citet{mindermann2022prioritized} select examples in an online fashion by taking the top $k$ examples in a minibatch according to excess loss. \algname optimizes the data mixture before training, allowing the larger main model to train in a standard way.
Many other works on data selection are in vision~\citep{sorscher2022beyond,kaushal2019learning,killamsetty2021glister,killamsetty2021gradmatch,killamsetty2021retrieve,wang2020optimizing,wei2015submodular,paul2021diet,mirzasoleiman2020coresets,sener2018active} and mainly focus on example-level subset selection with metrics such as gradient matching.
Overall, these methods do not address data selection for pretraining, where the downstream data distribution may be very different from the pretraining distribution. \algname aims to address the pretraining/downstream distribution shift with a robust optimization approach. To the best of our knowledge, we are the first to show that reweighting the data according to losses of a small proxy LM can improve the training efficiency of much larger LM.

\paragraph{Distributionally robust optimization.}
Within DRO methods for deep learning~\citep{bental2013robust,sinha2018certifiable,oren2019drolm,sagawa2020group}, we target a restricted form of shift called group shifts~\citep{duchi2019distributionally,oren2019drolm,sagawa2020group}, where the test distribution can be an unknown mixture of groups (domains).
We follow DRO-LM~\citep{oren2019drolm}, which employs DRO for LMs in the group shift setting.
DRO-LM also uses a baselined loss, but with a simple bigram reference model.
\algname uses a reference model of the same size and architecture as the proxy model to ensure that the losses are on a similar scale.
During optimization, DRO-LM takes a worst-case subset of each minibatch to update the model on, while we use the Group DRO optimizer~\citep{sagawa2020group} which doesn't require online subselection.
If we equalize the number of examples in each minibatch used for gradient updates, online subselelction is more expensive than Group DRO since it requires running forward passes on a larger minibatch (e.g., double the minibatch size) before selecting a subset to update the model with.
In comparison, the Group DRO optimizer updates the model on all examples in a weighted fashion.
Overall, in contrast to these DRO methods which aim to produce robust \textbf{models}, we use DRO to optimize the \textbf{data} for training larger models more efficiently.

\paragraph{Data-centric AI.}
Large-scale datasets and benchmarks have driven much of the recent progress in AI, including vision, NLP, and multimodal models~\citep{deng2009imagenet,russakovsky2015imagenet,wang2019glue,rajpurkar2016squad,raffel2019exploring, gao2020pile,schuhmann2022laion5b,gadre2023datacomp}. 
However, most datasets are still painstakingly created with human-generated data, manual work, and heuristics~\citep{deng2009imagenet,raffel2019exploring,gao2020pile,schuhmann2022laion5b,gadre2023datacomp}.
\algname is a principled data-centric method that aims to improve language model training efficiency.
We hope that \algname can provide a starting point for a general data-centric framework for language modeling via robust optimization.
\section{Discussion and Limitations}
\label{sec:discussion}

\paragraph{Saving compute in \algname with extrapolation.}
In Section~\ref{sec:method}, we run \algname for the number of training steps that will be used to train the final model, which could be unnecessarily expensive.
A future direction for saving compute would be to stop running \algname at an early step and extrapolate the \weights for the desired number of steps, since we found that most of the variation in the \weights during a \algname run seems to occur in the beginning of training (Appendix Figure~\ref{fig:pile-weights-evolution}).

\paragraph{Choice of reference model.}
The choice of reference model can affect the \weights found by \algname. For example, iterated \algname (Section~\ref{sec:experiments}) improves performance by using a reference model trained on the tuned \weights from a previous round of \algname.
Further directions include varying the reference model size and using specialized reference models to optimize \weights for a specific application area.

\paragraph{What is a domain?}
We define a domain by data provenance in our experiments, but this only enables coarse-grained control. Using fine-grained domains could improve the gains from \algname. For example, \algname is more effective on The Pile (22 domains) than the GLaM dataset (8 domains).
Open directions include automatically finding fine-grained domains (e.g., via clustering as in DRO-LM~\citep{oren2019drolm}) and reweighting the data at an example level. 
When domains are very fine-grained, it will be important to control the pessimism of DRO (e.g., DRO can put all the weight on a small set of worst-case examples).

\paragraph{Transferability of \weights across scales.}
We optimized the \weights with a small proxy model (280M) and directly used these \weights to improve training at a larger scale (8B).
Understanding why the \weights can be transferred across scales and the limits of how far these \weights transfer are important questions to answer in future work.

\paragraph{Broader impacts.}
Large language models are 
We hope to improve training efficiency and reduce the environmental impact of training large LMs~\citep{strubell2019energy,lacoste2019quantifying,patterson2021carbon,ligozat2021unraveling}.
In particular, by reducing the training time by 2x, we can halve the cost and energy consumption of training large language models.
Since such efficiency improvements may be used to develop even larger models, there may be no absolute improvement in energy consumption.
Ultimately, we hope to improve the training efficiency and cost of developing future language models relative to existing methods.

Large LMs have also been well-documented to have risks and biases~\citep{abid2021persistent,nadeem2020stereoset,bommasani2021opportunities,blodgett2017racial,gehman2020realtoxicityprompts}.
For example, GPT-3 tends to have an anti-Muslim bias, where Muslims are frequently related to violence or terrorism in analogy and completion tasks~\citep{abid2021persistent}.
As large language models are increasingly relied upon in applications, the magnitude of the risks increases~\citep{bommasani2022homogenization}.
Distributionally robust optimization (DRO), which is used in \algname to optimize the data mixture, can have a favorable impact on fairness~\citep{hashimoto2018repeated}.
While the standard approach of minimizing the average loss can lead to disparate performance on minority subgroups that do not contribute heavily to the loss~\citep{amodei2016}, DRO promotes good performance on all groups via a worst-case loss.
In this way, DRO-style data-centric methods such as \algname can improve the representation disparity between majority and minority subgroups in a dataset.

\section{Conclusion}
\label{sec:conclusion}
We introduced \algname, an algorithm reweighting data domains for training language models.
\algname is able to run on small models and transfer the benefits to 30x larger models, resulting in a 2.6x speedup in training on the Pile just by changing the sampling probabilities on domains.
We hope to instigate more research on data-centric approaches for improving language model training efficiency.

\section*{Acknowledgments}
We thank Xiangning Chen, Andrew Dai, Zoubin Ghahramani, Balaji Lakshminarayanan, Paul Michel, Yonghui Wu, Steven Zheng, Chen Zhu and the broader Google Bard team members for insightful discussions and pointers.

\bibliography{main}

\begin{thebibliography}{56}
\providecommand{\natexlab}[1]{#1}
\providecommand{\url}[1]{\texttt{#1}}
\expandafter\ifx\csname urlstyle\endcsname\relax
  \providecommand{\doi}[1]{doi: #1}\else
  \providecommand{\doi}{doi: \begingroup \urlstyle{rm}\Url}\fi

\bibitem[Abid et~al.(2021)Abid, Farooqi, and Zou]{abid2021persistent}
Abubakar Abid, Maheen Farooqi, and James Zou.
\newblock Persistent anti-muslim bias in large language models.
\newblock \emph{arXiv preprint arXiv:2101.05783}, 2021.

\bibitem[Amodei et~al.(2016)]{amodei2016}
Dario Amodei et~al.
\newblock Deep speech 2 end to end speech recognition in {E}nglish and
  mandarin.
\newblock In \emph{International Conference on Machine Learning (ICML)}, pages
  173--182, 2016.

\bibitem[Axelrod(2017)]{axelrod2017cynical}
Amittai Axelrod.
\newblock Cynical selection of language model training data.
\newblock \emph{CoRR}, abs/1709.02279, 2017.
\newblock URL \url{http://arxiv.org/abs/1709.02279}.

\bibitem[Ben-Tal et~al.(2013)Ben-Tal, den Hertog, Waegenaere, Melenberg, and
  Rennen]{bental2013robust}
Aharon Ben-Tal, Dick den Hertog, Anja~De Waegenaere, Bertrand Melenberg, and
  Gijs Rennen.
\newblock Robust solutions of optimization problems affected by uncertain
  probabilities.
\newblock \emph{Management Science}, 59:\penalty0 341--357, 2013.

\bibitem[Berant et~al.(2013)Berant, Chou, Frostig, and
  Liang]{berant2013freebase}
Jonathan Berant, Andrew Chou, Roy Frostig, and Percy Liang.
\newblock Semantic parsing on {F}reebase from question-answer pairs.
\newblock In \emph{Empirical Methods in Natural Language Processing (EMNLP)},
  2013.

\bibitem[Blodgett and OConnor(2017)]{blodgett2017racial}
Su~Lin Blodgett and Brendan OConnor.
\newblock Racial disparity in natural language processing: A case study of
  social media {A}frican-{A}merican {E}nglish.
\newblock \emph{arXiv preprint arXiv:1707.00061}, 2017.

\bibitem[Bommasani et~al.(2021)Bommasani, Hudson, Adeli, Altman, Arora, von
  Arx, Bernstein, Bohg, Bosselut, Brunskill, Brynjolfsson, Buch, Card,
  Castellon, Chatterji, Chen, Creel, Davis, Demszky, Donahue, Doumbouya,
  Durmus, Ermon, Etchemendy, Ethayarajh, Fei-Fei, Finn, Gale, Gillespie, Goel,
  Goodman, Grossman, Guha, Hashimoto, Henderson, Hewitt, Ho, Hong, Hsu, Huang,
  Icard, Jain, Jurafsky, Kalluri, Karamcheti, Keeling, Khani, Khattab, Koh,
  Krass, Krishna, Kuditipudi, Kumar, Ladhak, Lee, Lee, Leskovec, Levent, Li,
  Li, Ma, Malik, Manning, Mirchandani, Mitchell, Munyikwa, Nair, Narayan,
  Narayanan, Newman, Nie, Niebles, Nilforoshan, Nyarko, Ogut, Orr,
  Papadimitriou, Park, Piech, Portelance, Potts, Raghunathan, Reich, Ren, Rong,
  Roohani, Ruiz, Ryan, Ré, Sadigh, Sagawa, Santhanam, Shih, Srinivasan,
  Tamkin, Taori, Thomas, Tramèr, Wang, Wang, Wu, Wu, Wu, Xie, Yasunaga, You,
  Zaharia, Zhang, Zhang, Zhang, Zhang, Zheng, Zhou, and
  Liang]{bommasani2021opportunities}
Rishi Bommasani, Drew~A. Hudson, Ehsan Adeli, Russ Altman, Simran Arora, Sydney
  von Arx, Michael~S. Bernstein, Jeannette Bohg, Antoine Bosselut, Emma
  Brunskill, Erik Brynjolfsson, Shyamal Buch, Dallas Card, Rodrigo Castellon,
  Niladri Chatterji, Annie Chen, Kathleen Creel, Jared~Quincy Davis, Dorottya
  Demszky, Chris Donahue, Moussa Doumbouya, Esin Durmus, Stefano Ermon, John
  Etchemendy, Kawin Ethayarajh, Li~Fei-Fei, Chelsea Finn, Trevor Gale, Lauren
  Gillespie, Karan Goel, Noah Goodman, Shelby Grossman, Neel Guha, Tatsunori
  Hashimoto, Peter Henderson, John Hewitt, Daniel~E. Ho, Jenny Hong, Kyle Hsu,
  Jing Huang, Thomas Icard, Saahil Jain, Dan Jurafsky, Pratyusha Kalluri,
  Siddharth Karamcheti, Geoff Keeling, Fereshte Khani, Omar Khattab, Pang~Wei
  Koh, Mark Krass, Ranjay Krishna, Rohith Kuditipudi, Ananya Kumar, Faisal
  Ladhak, Mina Lee, Tony Lee, Jure Leskovec, Isabelle Levent, Xiang~Lisa Li,
  Xuechen Li, Tengyu Ma, Ali Malik, Christopher~D. Manning, Suvir Mirchandani,
  Eric Mitchell, Zanele Munyikwa, Suraj Nair, Avanika Narayan, Deepak
  Narayanan, Ben Newman, Allen Nie, Juan~Carlos Niebles, Hamed Nilforoshan,
  Julian Nyarko, Giray Ogut, Laurel Orr, Isabel Papadimitriou, Joon~Sung Park,
  Chris Piech, Eva Portelance, Christopher Potts, Aditi Raghunathan, Rob Reich,
  Hongyu Ren, Frieda Rong, Yusuf Roohani, Camilo Ruiz, Jack Ryan, Christopher
  Ré, Dorsa Sadigh, Shiori Sagawa, Keshav Santhanam, Andy Shih, Krishnan
  Srinivasan, Alex Tamkin, Rohan Taori, Armin~W. Thomas, Florian Tramèr,
  Rose~E. Wang, William Wang, Bohan Wu, Jiajun Wu, Yuhuai Wu, Sang~Michael Xie,
  Michihiro Yasunaga, Jiaxuan You, Matei Zaharia, Michael Zhang, Tianyi Zhang,
  Xikun Zhang, Yuhui Zhang, Lucia Zheng, Kaitlyn Zhou, and Percy Liang.
\newblock On the opportunities and risks of foundation models.
\newblock \emph{arXiv preprint arXiv:2108.07258}, 2021.

\bibitem[Bommasani et~al.(2022)Bommasani, Creel, Kumar, Jurafsky, and
  Liang]{bommasani2022homogenization}
Rishi Bommasani, Kathleen~A. Creel, Ananya Kumar, Dan Jurafsky, and Percy
  Liang.
\newblock Picking on the same person: Does algorithmic monoculture lead to
  outcome homogenization?
\newblock In \emph{Advances in Neural Information Processing Systems
  (NeurIPS)}, 2022.

\bibitem[Brown et~al.(2020)Brown, Mann, Ryder, Subbiah, Kaplan, Dhariwal,
  Neelakantan, Shyam, Sastry, Askell, Agarwal, Herbert-Voss, Krueger, Henighan,
  Child, Ramesh, Ziegler, Wu, Winter, Hesse, Chen, Sigler, Litwin, Gray, Chess,
  Clark, Berner, McCandlish, Radford, Sutskever, and Amodei]{brown2020gpt3}
Tom~B. Brown, Benjamin Mann, Nick Ryder, Melanie Subbiah, Jared Kaplan,
  Prafulla Dhariwal, Arvind Neelakantan, Pranav Shyam, Girish Sastry, Amanda
  Askell, Sandhini Agarwal, Ariel Herbert-Voss, Gretchen Krueger, Tom Henighan,
  Rewon Child, Aditya Ramesh, Daniel~M. Ziegler, Jeffrey Wu, Clemens Winter,
  Christopher Hesse, Mark Chen, Eric Sigler, Mateusz Litwin, Scott Gray,
  Benjamin Chess, Jack Clark, Christopher Berner, Sam McCandlish, Alec Radford,
  Ilya Sutskever, and Dario Amodei.
\newblock Language models are few-shot learners.
\newblock \emph{arXiv preprint arXiv:2005.14165}, 2020.

\bibitem[Chowdhery et~al.(2022)Chowdhery, Narang, Devlin, Bosma, Mishra,
  Roberts, Barham, Chung, Sutton, Gehrmann, Schuh, Shi, Tsvyashchenko, Maynez,
  Rao, Barnes, Tay, Shazeer, Prabhakaran, Reif, Du, Hutchinson, Pope, Bradbury,
  Austin, Isard, Gur-Ari, Yin, Duke, Levskaya, Ghemawat, Dev, Michalewski,
  García, Misra, Robinson, Fedus, Zhou, Ippolito, Luan, Lim, Zoph, Spiridonov,
  Sepassi, Dohan, Agrawal, Omernick, Dai, Pillai, Pellat, Lewkowycz, Moreira,
  Child, Polozov, Lee, Zhou, Wang, Saeta, Diaz, Firat, Catasta, Wei,
  Meier-Hellstern, Eck, Dean, Petrov, and Fiedel]{chowdhery2022palm}
Aakanksha Chowdhery, Sharan Narang, Jacob Devlin, Maarten Bosma, Gaurav Mishra,
  Adam Roberts, Paul Barham, Hyung~Won Chung, Charles Sutton, Sebastian
  Gehrmann, Parker Schuh, Kensen Shi, Sasha Tsvyashchenko, Joshua Maynez,
  A.~Rao, Parker Barnes, Yi~Tay, Noam~M. Shazeer, Vinodkumar Prabhakaran, Emily
  Reif, Nan Du, B.~Hutchinson, Reiner Pope, James Bradbury, Jacob Austin,
  M.~Isard, Guy Gur-Ari, Pengcheng Yin, Toju Duke, Anselm Levskaya,
  S.~Ghemawat, Sunipa Dev, Henryk Michalewski, Xavier García, Vedant Misra,
  Kevin Robinson, Liam Fedus, Denny Zhou, Daphne Ippolito, D.~Luan, Hyeontaek
  Lim, Barret Zoph, A.~Spiridonov, Ryan Sepassi, David Dohan, Shivani Agrawal,
  Mark Omernick, Andrew~M. Dai, T.~S. Pillai, Marie Pellat, Aitor Lewkowycz,
  E.~Moreira, Rewon Child, Oleksandr Polozov, Katherine Lee, Zongwei Zhou,
  Xuezhi Wang, Brennan Saeta, Mark Diaz, Orhan Firat, Michele Catasta, Jason
  Wei, K.~Meier-Hellstern, D.~Eck, J.~Dean, Slav Petrov, and Noah Fiedel.
\newblock {PaLM}: Scaling language modeling with pathways.
\newblock \emph{arXiv}, 2022.

\bibitem[Coleman et~al.(2020)Coleman, Yeh, Mussmann, Mirzasoleiman, Bailis,
  Liang, Leskovec, and Zaharia]{coleman2020selection}
Cody Coleman, Christopher Yeh, Stephen Mussmann, Baharan Mirzasoleiman, Peter
  Bailis, Percy Liang, Jure Leskovec, and Matei Zaharia.
\newblock Selection via proxy: Efficient data selection for deep learning.
\newblock In \emph{International Conference on Learning Representations
  (ICLR)}, 2020.

\bibitem[Deng et~al.(2009)Deng, Dong, Socher, Li, Li, and
  Fei-Fei]{deng2009imagenet}
Jia Deng, Wei Dong, Richard Socher, Li-Jia Li, Kai Li, and Li~Fei-Fei.
\newblock {I}mage{N}et: A large-scale hierarchical image database.
\newblock In \emph{Computer Vision and Pattern Recognition (CVPR)}, pages
  248--255, 2009.

\bibitem[Du et~al.(2021)Du, Huang, Dai, Tong, Lepikhin, Xu, Krikun, Zhou, Yu,
  Firat, Zoph, Fedus, Bosma, Zhou, Wang, Wang, Webster, Pellat, Robinson,
  Meier-Hellstern, Duke, Dixon, Zhang, Le, Wu, Chen, and Cui]{du2021glam}
Nan Du, Yanping Huang, Andrew~M. Dai, Simon Tong, Dmitry Lepikhin, Yuanzhong
  Xu, M.~Krikun, Yanqi Zhou, Adams~Wei Yu, Orhan Firat, Barret Zoph, Liam
  Fedus, Maarten Bosma, Zongwei Zhou, Tao Wang, Yu~Emma Wang, Kellie Webster,
  Marie Pellat, Kevin Robinson, K.~Meier-Hellstern, Toju Duke, Lucas Dixon, Kun
  Zhang, Quoc~V. Le, Yonghui Wu, Zhifeng Chen, and Claire Cui.
\newblock {GLaM}: Efficient scaling of language models with mixture-of-experts.
\newblock \emph{arXiv}, 2021.

\bibitem[Duchi et~al.(2019)Duchi, Hashimoto, and
  Namkoong]{duchi2019distributionally}
John Duchi, Tatsunori Hashimoto, and Hongseok Namkoong.
\newblock Distributionally robust losses against mixture covariate shifts.
\newblock
  \url{https://cs.stanford.edu/~thashim/assets/publications/condrisk.pdf},
  2019.

\bibitem[Feng et~al.(2022)Feng, Xia, Van~Durme, and Sedoc]{feng2022automatic}
Yukun Feng, Patrick Xia, Benjamin Van~Durme, and João Sedoc.
\newblock Automatic document selection for efficient encoder pretraining, 2022.
\newblock URL \url{https://arxiv.org/abs/2210.10951}.

\bibitem[Gadre et~al.(2023)Gadre, Ilharco, Fang, Hayase, Smyrnis, Nguyen,
  Marten, Wortsman, Ghosh, Zhang, Orgad, Entezari, Daras, Pratt, Ramanujan,
  Bitton, Marathe, Mussmann, Vencu, Cherti, Krishna, Koh, Saukh, Ratner, Song,
  Hajishirzi, Farhadi, Beaumont, Oh, Dimakis, Jitsev, Carmon, Shankar, and
  Schmidt]{gadre2023datacomp}
Samir~Yitzhak Gadre, Gabriel Ilharco, Alex Fang, Jonathan Hayase, Georgios
  Smyrnis, Thao Nguyen, Ryan Marten, Mitchell Wortsman, Dhruba Ghosh, Jieyu
  Zhang, Eyal Orgad, Rahim Entezari, Giannis Daras, Sarah Pratt, Vivek
  Ramanujan, Yonatan Bitton, Kalyani Marathe, Stephen Mussmann, Richard Vencu,
  Mehdi Cherti, Ranjay Krishna, Pang~Wei Koh, Olga Saukh, Alexander Ratner,
  Shuran Song, Hannaneh Hajishirzi, Ali Farhadi, Romain Beaumont, Sewoong Oh,
  Alex Dimakis, Jenia Jitsev, Yair Carmon, Vaishaal Shankar, and Ludwig
  Schmidt.
\newblock Datacomp: In search of the next generation of multimodal datasets.
\newblock \emph{arXiv preprint arXiv:2304.14108}, 2023.

\bibitem[Gao et~al.(2020)Gao, Biderman, Black, Golding, Hoppe, Foster, Phang,
  He, Thite, Nabeshima, Presser, and Leahy]{gao2020pile}
Leo Gao, Stella Biderman, Sid Black, Laurence Golding, Travis Hoppe, Charles
  Foster, Jason Phang, Horace He, Anish Thite, Noa Nabeshima, Shawn Presser,
  and Connor Leahy.
\newblock The pile: An 800gb dataset of diverse text for language modeling.
\newblock \emph{arXiv}, 2020.

\bibitem[Gehman et~al.(2020)Gehman, Gururangan, Sap, Choi, and
  Smith]{gehman2020realtoxicityprompts}
Samuel Gehman, Suchin Gururangan, Maarten Sap, Yejin Choi, and Noah~A Smith.
\newblock Realtoxicityprompts: Evaluating neural toxic degeneration in language
  models.
\newblock \emph{arXiv preprint arXiv:2009.11462}, 2020.

\bibitem[Hashimoto et~al.(2018)Hashimoto, Srivastava, Namkoong, and
  Liang]{hashimoto2018repeated}
Tatsunori~B. Hashimoto, Megha Srivastava, Hongseok Namkoong, and Percy Liang.
\newblock Fairness without demographics in repeated loss minimization.
\newblock In \emph{International Conference on Machine Learning (ICML)}, 2018.

\bibitem[Hoffmann et~al.(2022)Hoffmann, Borgeaud, Mensch, Buchatskaya, Cai,
  Rutherford, de~Las~Casas, Hendricks, Welbl, Clark, Hennigan, Noland,
  Millican, van~den Driessche, Damoc, Guy, Osindero, Simonyan, Elsen, Rae,
  Vinyals, and Sifre]{hoffmann2022chinchilla}
Jordan Hoffmann, Sebastian Borgeaud, Arthur Mensch, Elena Buchatskaya, Trevor
  Cai, Eliza Rutherford, Diego de~Las~Casas, Lisa~Anne Hendricks, Johannes
  Welbl, Aidan Clark, Tom Hennigan, Eric Noland, Katie Millican, George van~den
  Driessche, Bogdan Damoc, Aurelia Guy, Simon Osindero, Karen Simonyan, Erich
  Elsen, Jack~W. Rae, Oriol Vinyals, and Laurent Sifre.
\newblock An empirical analysis of compute-optimal large language model
  training.
\newblock In \emph{Advances in Neural Information Processing Systems
  (NeurIPS)}, 2022.

\bibitem[Joshi et~al.(2017)Joshi, Choi, Weld, and
  Zettlemoyer]{joshi2017triviaqa}
Mandar Joshi, Eunsol Choi, Daniel Weld, and Luke Zettlemoyer.
\newblock {TriviaQA}: A large scale distantly supervised challenge dataset for
  reading comprehension.
\newblock In \emph{Association for Computational Linguistics (ACL)}, 2017.

\bibitem[Kaushal et~al.(2019)Kaushal, Iyer, Kothawade, Mahadev, Doctor, and
  Ramakrishnan]{kaushal2019learning}
Vishal Kaushal, Rishabh Iyer, Suraj Kothawade, Rohan Mahadev, Khoshrav Doctor,
  and Ganesh Ramakrishnan.
\newblock Learning from less data: A unified data subset selection and active
  learning framework for computer vision.
\newblock \emph{IEEE/CVF Winter Conference on Applicatios of Computer Vision
  (WACV)}, 2019.

\bibitem[Killamsetty et~al.(2021{\natexlab{a}})Killamsetty, S, Ramakrishnan,
  De, and Iyer]{killamsetty2021gradmatch}
Krishnateja Killamsetty, Durga S, Ganesh Ramakrishnan, Abir De, and Rishabh
  Iyer.
\newblock {GRAD-MATCH}: Gradient matching based data subset selection for
  efficient deep model training.
\newblock In \emph{International Conference on Machine Learning (ICML)},
  2021{\natexlab{a}}.

\bibitem[Killamsetty et~al.(2021{\natexlab{b}})Killamsetty, Sivasubramanian,
  Ramakrishnan, and Iyer]{killamsetty2021glister}
Krishnateja Killamsetty, Durga Sivasubramanian, Ganesh Ramakrishnan, and
  Rishabh Iyer.
\newblock Glister: Generalization based data subset selection for efficient and
  robust learning.
\newblock In \emph{Association for the Advancement of Artificial Intelligence
  (AAAI)}, 2021{\natexlab{b}}.

\bibitem[Killamsetty et~al.(2021{\natexlab{c}})Killamsetty, Zhao, Chen, and
  Iyer]{killamsetty2021retrieve}
Krishnateja Killamsetty, Xujiang Zhao, Feng Chen, and Rishabh Iyer.
\newblock Retrieve: Coreset selection for efficient and robust semi-supervised
  learning.
\newblock In \emph{Advances in Neural Information Processing Systems
  (NeurIPS)}, 2021{\natexlab{c}}.

\bibitem[Kingma and Ba(2015)]{kingma2015adam}
Diederik Kingma and Jimmy Ba.
\newblock Adam: A method for stochastic optimization.
\newblock In \emph{International Conference on Learning Representations
  (ICLR)}, 2015.

\bibitem[Kwiatkowski et~al.(2019)Kwiatkowski, Palomaki, Redfield, Collins,
  Parikh, Alberti, Epstein, Polosukhin, Kelcey, Devlin, Lee, Toutanova, Jones,
  Chang, Dai, Uszkoreit, Le, and Petrov]{kwiatkowski2019natural}
Tom Kwiatkowski, Jennimaria Palomaki, Olivia Redfield, Michael Collins, Ankur
  Parikh, Chris Alberti, Danielle Epstein, Illia Polosukhin, Matthew Kelcey,
  Jacob Devlin, Kenton Lee, Kristina~N. Toutanova, Llion Jones, Ming-Wei Chang,
  Andrew Dai, Jakob Uszkoreit, Quoc Le, and Slav Petrov.
\newblock Natural questions: A benchmark for question answering research.
\newblock In \emph{Association for Computational Linguistics (ACL)}, 2019.

\bibitem[Lacoste et~al.(2019)Lacoste, Luccioni, Schmidt, and
  Dandres]{lacoste2019quantifying}
Alexandre Lacoste, Alexandra Luccioni, Victor Schmidt, and Thomas Dandres.
\newblock Quantifying the carbon emissions of machine learning.
\newblock \emph{arXiv preprint arXiv:1910.09700}, 2019.

\bibitem[Ligozat et~al.(2021)Ligozat, Lef{\`{e}}vre, Bugeau, and
  Combaz]{ligozat2021unraveling}
Anne{-}Laure Ligozat, Julien Lef{\`{e}}vre, Aur{\'{e}}lie Bugeau, and Jacques
  Combaz.
\newblock Unraveling the hidden environmental impacts of {AI} solutions for
  environment.
\newblock \emph{CoRR}, abs/2110.11822, 2021.
\newblock URL \url{https://arxiv.org/abs/2110.11822}.

\bibitem[Mindermann et~al.(2022)Mindermann, Brauner, Razzak, Sharma, Kirsch,
  Xu, Höltgen, Gomez, Morisot, Farquhar, and Gal]{mindermann2022prioritized}
Sören Mindermann, Jan Brauner, Muhammed Razzak, Mrinank Sharma, Andreas
  Kirsch, Winnie Xu, Benedikt Höltgen, Aidan~N. Gomez, Adrien Morisot,
  Sebastian Farquhar, and Yarin Gal.
\newblock Prioritized training on points that are learnable, worth learning,
  and not yet learnt.
\newblock In \emph{International Conference on Machine Learning (ICML)}, 2022.

\bibitem[Mirzasoleiman et~al.(2020)Mirzasoleiman, Bilmes, and
  Leskovec]{mirzasoleiman2020coresets}
Baharan Mirzasoleiman, Jeff Bilmes, and Jure Leskovec.
\newblock Coresets for data-efficient training of machine learning models.
\newblock In \emph{International Conference on Machine Learning (ICML)}, 2020.

\bibitem[Moore and Lewis(2010)]{moore2010intelligent}
Robert~C. Moore and William Lewis.
\newblock Intelligent selection of language model training data.
\newblock In \emph{Proceedings of the {ACL} 2010 Conference Short Papers},
  pages 220--224, Uppsala, Sweden, July 2010. Association for Computational
  Linguistics.
\newblock URL \url{https://aclanthology.org/P10-2041}.

\bibitem[Nadeem et~al.(2020)Nadeem, Bethke, and Reddy]{nadeem2020stereoset}
Moin Nadeem, Anna Bethke, and Siva Reddy.
\newblock Stereoset: Measuring stereotypical bias in pretrained language
  models.
\newblock \emph{arXiv preprint arXiv:2004.09456}, 2020.

\bibitem[Nemirovski et~al.(2009)Nemirovski, Juditsky, Lan, and
  Shapiro]{nemirovski2009robust}
Arkadi Nemirovski, Anatoli Juditsky, Guanghui Lan, and Alexander Shapiro.
\newblock Robust stochastic approximation approach to stochastic programming.
\newblock \emph{SIAM Journal on optimization}, 19\penalty0 (4):\penalty0
  1574--1609, 2009.

\bibitem[Oren et~al.(2019)Oren, Sagawa, Hashimoto, and Liang]{oren2019drolm}
Yonatan Oren, Shiori Sagawa, Tatsunori Hashimoto, and Percy Liang.
\newblock Distributionally robust language modeling.
\newblock In \emph{Empirical Methods in Natural Language Processing (EMNLP)},
  2019.

\bibitem[Paperno et~al.(2016)Paperno, Kruszewski, Lazaridou, Pham, Bernardi,
  Pezzelle, Baroni, Boleda, and Fernandez]{paperno2016lambada}
Denis Paperno, German Kruszewski, Angeliki Lazaridou, Quan~Ngoc Pham, Raffaella
  Bernardi, Sandro Pezzelle, Marco Baroni, Gemma Boleda, and Raquel Fernandez.
\newblock The {LAMBADA} dataset: Word prediction requiring a broad discourse
  context.
\newblock In \emph{Association for Computational Linguistics (ACL)}, 2016.

\bibitem[Patterson et~al.(2021)Patterson, Gonzalez, Le, Liang, Munguia,
  Rothchild, So, Texier, and Dean]{patterson2021carbon}
David~A. Patterson, Joseph Gonzalez, Quoc~V. Le, Chen Liang, Lluis{-}Miquel
  Munguia, Daniel Rothchild, David~R. So, Maud Texier, and Jeff Dean.
\newblock Carbon emissions and large neural network training.
\newblock \emph{CoRR}, abs/2104.10350, 2021.
\newblock URL \url{https://arxiv.org/abs/2104.10350}.

\bibitem[Paul et~al.(2021)Paul, Ganguli, and Dziugaite]{paul2021diet}
Mansheej Paul, Surya Ganguli, and Gintare~Karolina Dziugaite.
\newblock Deep learning on a data diet: Finding important examples early in
  training.
\newblock In \emph{Association for the Advancement of Artificial Intelligence
  (AAAI)}, 2021.

\bibitem[Raffel et~al.(2019)Raffel, Shazeer, Roberts, Lee, Narang, Matena,
  Zhou, Li, and Liu]{raffel2019exploring}
Colin Raffel, Noam Shazeer, Adam Roberts, Katherine Lee, Sharan Narang, Michael
  Matena, Yanqi Zhou, Wei Li, and Peter~J. Liu.
\newblock Exploring the limits of transfer learning with a unified text-to-text
  transformer.
\newblock \emph{arXiv preprint arXiv:1910.10683}, 2019.

\bibitem[Rajpurkar et~al.(2016)Rajpurkar, Zhang, Lopyrev, and
  Liang]{rajpurkar2016squad}
Pranav Rajpurkar, Jian Zhang, Konstantin Lopyrev, and Percy Liang.
\newblock {SQuAD}: 100,000+ questions for machine comprehension of text.
\newblock In \emph{Empirical Methods in Natural Language Processing (EMNLP)},
  2016.

\bibitem[Rajpurkar et~al.(2018)Rajpurkar, Jia, and
  Liang]{rajpurkar2018squadrun}
Pranav Rajpurkar, Robin Jia, and Percy Liang.
\newblock Know what you don't know: Unanswerable questions for {SQuAD}.
\newblock In \emph{Association for Computational Linguistics (ACL)}, 2018.

\bibitem[Russakovsky et~al.(2015)Russakovsky, Deng, Su, Krause, Satheesh, Ma,
  Huang, Karpathy, Khosla, Bernstein, et~al.]{russakovsky2015imagenet}
Olga Russakovsky, Jia Deng, Hao Su, Jonathan Krause, Sanjeev Satheesh, Sean Ma,
  Zhiheng Huang, Andrej Karpathy, Aditya Khosla, Michael Bernstein, et~al.
\newblock {ImageNet} large scale visual recognition challenge.
\newblock \emph{International Journal of Computer Vision}, 115\penalty0
  (3):\penalty0 211--252, 2015.

\bibitem[Sagawa et~al.(2020)Sagawa, Koh, Hashimoto, and Liang]{sagawa2020group}
Shiori Sagawa, Pang~Wei Koh, Tatsunori~B. Hashimoto, and Percy Liang.
\newblock Distributionally robust neural networks for group shifts: On the
  importance of regularization for worst-case generalization.
\newblock In \emph{International Conference on Learning Representations
  (ICLR)}, 2020.

\bibitem[Schuhmann et~al.(2022)Schuhmann, Beaumont, Vencu, Gordon, Wightman,
  Cherti, Coombes, Katta, Mullis, Wortsman, Schramowski, Kundurthy, Crowson,
  Schmidt, Kaczmarczyk, and Jitsev]{schuhmann2022laion5b}
Christoph Schuhmann, Romain Beaumont, Richard Vencu, Cade Gordon, Ross
  Wightman, Mehdi Cherti, Theo Coombes, Aarush Katta, Clayton Mullis, Mitchell
  Wortsman, Patrick Schramowski, Srivatsa Kundurthy, Katherine Crowson, Ludwig
  Schmidt, Robert Kaczmarczyk, and Jenia Jitsev.
\newblock Laion-5b: An open large-scale dataset for training next generation
  image-text models.
\newblock In \emph{Advances in Neural Information Processing Systems
  (NeurIPS)}, 2022.

\bibitem[Sener and Savarese(2018)]{sener2018active}
Ozan Sener and Silvio Savarese.
\newblock Active learning for convolutional neural networks: A core-set
  approach.
\newblock In \emph{International Conference on Learning Representations
  (ICLR)}, 2018.

\bibitem[Shazeer and Stern(2018)]{shazeer2018adafactor}
Noam Shazeer and Mitchell Stern.
\newblock 2018.

\bibitem[Sinha et~al.(2018)Sinha, Namkoong, and Duchi]{sinha2018certifiable}
Aman Sinha, Hongseok Namkoong, and John Duchi.
\newblock Certifiable distributional robustness with principled adversarial
  training.
\newblock In \emph{International Conference on Learning Representations
  (ICLR)}, 2018.

\bibitem[Snoek et~al.(2012)Snoek, Larochelle, and Adams]{snoek12hyper}
Jasper Snoek, Hugo Larochelle, and Ryan~P. Adams.
\newblock Practical {B}ayesian optimization of machine learning algorithms.
\newblock In \emph{Advances in Neural Information Processing Systems
  (NeurIPS)}, 2012.

\bibitem[Sorscher et~al.(2022)Sorscher, Geirhos, Shekhar, Ganguli, and
  Morcos]{sorscher2022beyond}
Ben Sorscher, Robert Geirhos, Shashank Shekhar, Surya Ganguli, and Ari~S.
  Morcos.
\newblock Beyond neural scaling laws: beating power law scaling via data
  pruning.
\newblock \emph{arXiv}, 2022.

\bibitem[Strubell et~al.(2019)Strubell, Ganesh, and
  McCallum]{strubell2019energy}
Emma Strubell, Ananya Ganesh, and Andrew McCallum.
\newblock Energy and policy considerations for deep learning in {NLP}.
\newblock In \emph{Proceedings of the 57th Annual Meeting of the Association
  for Computational Linguistics}, pages 3645--3650, Florence, Italy, July 2019.
  Association for Computational Linguistics.
\newblock \doi{10.18653/v1/P19-1355}.
\newblock URL \url{https://aclanthology.org/P19-1355}.

\bibitem[Vaswani et~al.(2017)Vaswani, Shazeer, Parmar, Uszkoreit, Jones, Gomez,
  Kaiser, and Polosukhin]{vaswani2017attention}
Ashish Vaswani, Noam Shazeer, Niki Parmar, Jakob Uszkoreit, Llion Jones,
  Aidan~N Gomez, Lukasz Kaiser, and Illia Polosukhin.
\newblock Attention is all you need.
\newblock \emph{arXiv preprint arXiv:1706.03762}, 2017.

\bibitem[Wang et~al.(2019)Wang, Singh, Michael, Hill, Levy, and
  Bowman]{wang2019glue}
Alex Wang, Amapreet Singh, Julian Michael, Felix Hill, Omer Levy, and Samuel~R
  Bowman.
\newblock {GLUE}: A multi-task benchmark and analysis platform for natural
  language understanding.
\newblock In \emph{International Conference on Learning Representations
  (ICLR)}, 2019.

\bibitem[Wang et~al.(2020)Wang, Pham, Michel, Anastasopoulos, Carbonell, and
  Neubig]{wang2020optimizing}
Xinyi Wang, Hieu Pham, Paul Michel, Antonios Anastasopoulos, Jaime Carbonell,
  and Graham Neubig.
\newblock Optimizing data usage via differentiable rewards.
\newblock In \emph{International Conference on Machine Learning (ICML)}, 2020.

\bibitem[Wei et~al.(2015)Wei, Iyer, and Bilmes]{wei2015submodular}
Kai Wei, Rishabh Iyer, and Jeff Bilmes.
\newblock Submodularity in data subset selection and active learning.
\newblock In \emph{International Conference on Machine Learning (ICML)}, 2015.

\bibitem[Xie et~al.(2023)Xie, Santurkar, Ma, and Liang]{xie2023data}
Sang~Michael Xie, Shibani Santurkar, Tengyu Ma, and Percy Liang.
\newblock Data selection for language models via importance resampling.
\newblock \emph{arXiv preprint arXiv:2302.03169}, 2023.

\bibitem[Zoph and Le(2016)]{zoph2016neural}
Barret Zoph and Quoc~V Le.
\newblock Neural architecture search with reinforcement learning.
\newblock \emph{arXiv preprint arXiv:1611.01578}, 2016.

\end{thebibliography}

\appendix
\onecolumn
\begin{figure}
\centering
\begin{subfigure}{0.49\textwidth}
\centering
\includegraphics[width=\textwidth]{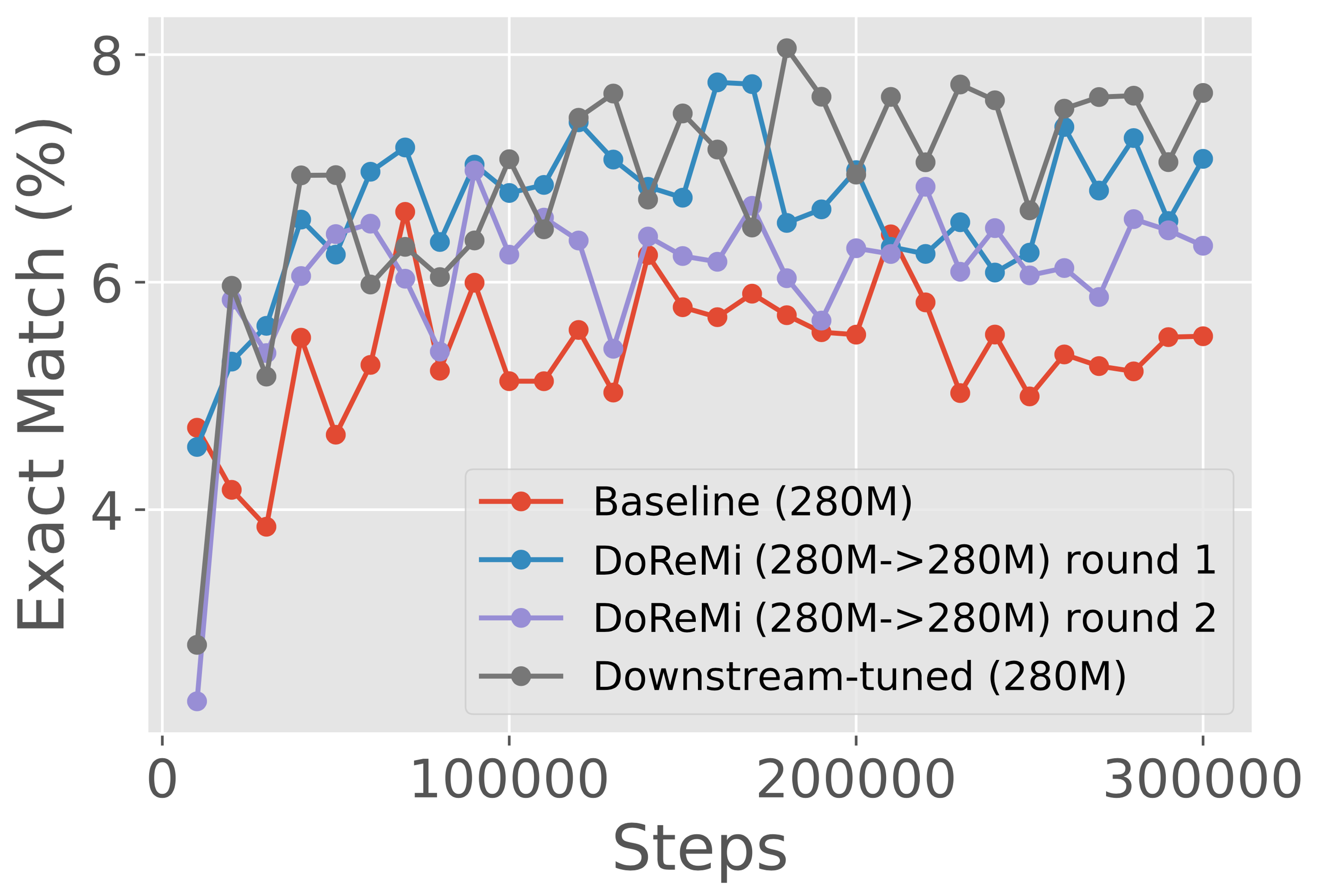}
\caption{280M}
\end{subfigure}
\hfill
\begin{subfigure}{0.49\textwidth}
\centering
\includegraphics[width=\textwidth]{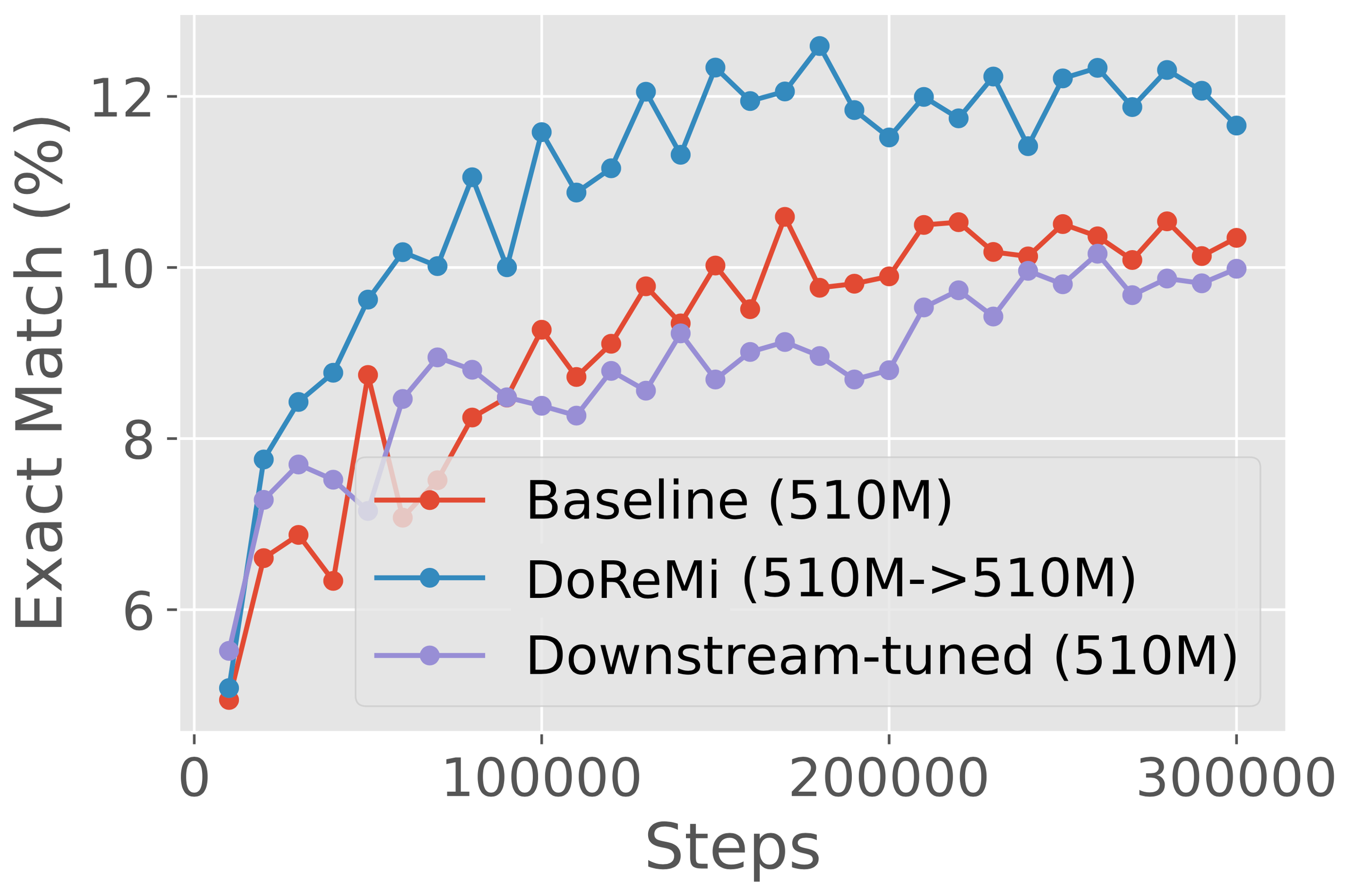}
\caption{510M}
\end{subfigure}
\hfill
\begin{subfigure}{0.49\textwidth}
\centering
\includegraphics[width=\textwidth]{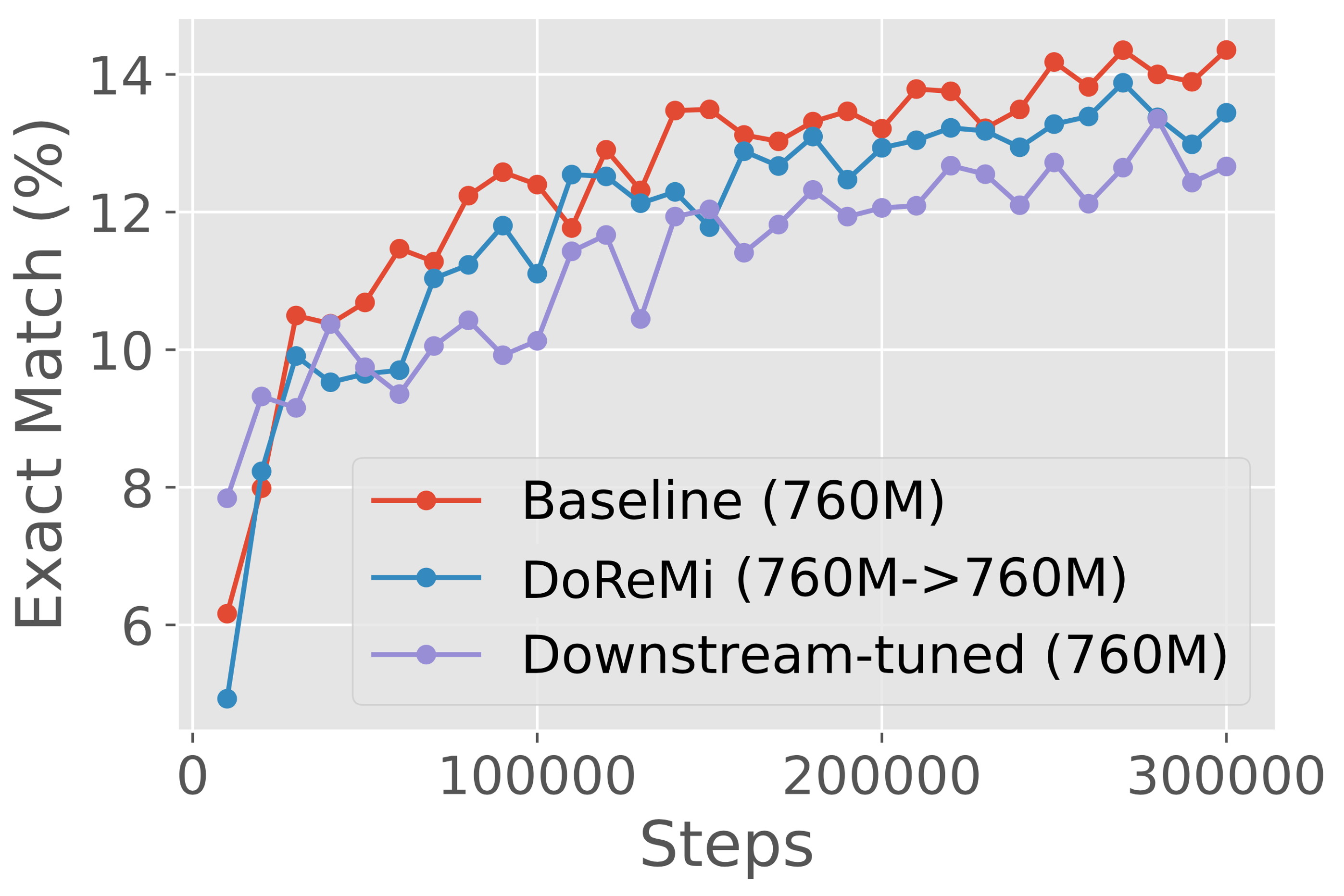}
\caption{760M}
\end{subfigure}
\hfill
\begin{subfigure}{0.49\textwidth}
\centering
\includegraphics[width=\textwidth]{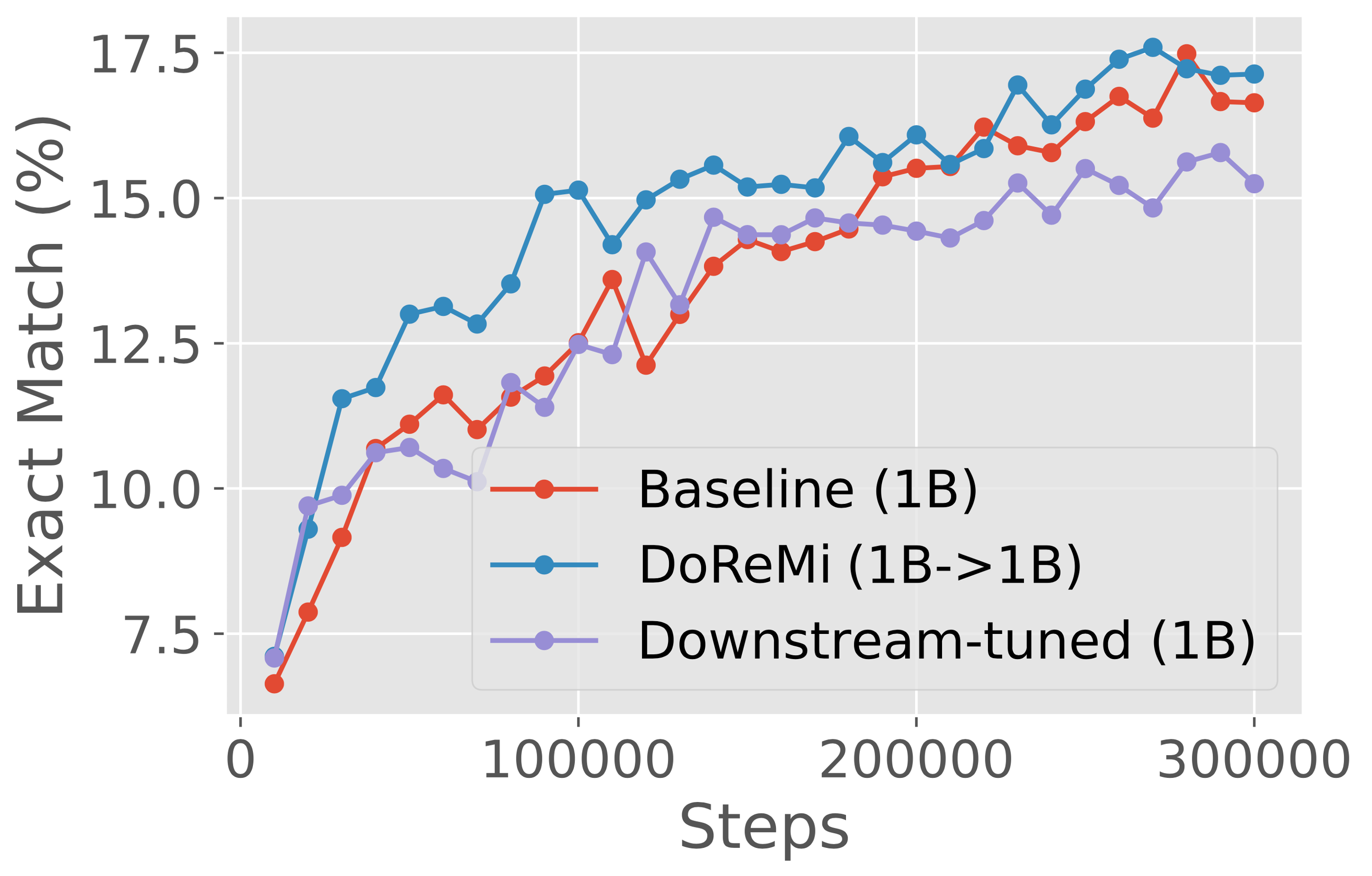}
\caption{1B}
\end{subfigure}
\caption{Average one-shot downstream accuracy across 4 model scales, where the reference/proxy models for \algname are the same size as the final model trained with \algname \weights. All models in this figure are trained on the GLaM dataset. \algname consistently improves downstream accuracy across scales. }
\label{fig:glam-scaling}
\end{figure}

\section{Results Across Scales on the GLaM dataset}
\label{app:glam-scaling}
Figure~\ref{fig:glam-scaling} presents results across different scales (280M, 510M, 760M, 1B) on the GLaM dataset, where the proxy/reference models are the same size as the main model trained with \algname \weights.
Across all scales, \algname is comparable or better than both the baseline (uniform) \weights and downstream-tuned \weights.
Interestingly, for iterated \algname at the 280M scale, the second round weights achieve slightly worse downstream accuracy than the round 1 weights when used to train 280M models, but transfer better to training 8B models.

\begin{table}
\caption{Per-domain log-perplexities for 8B models trained on The Pile where the reference/proxy models are or smaller sizes (70M, 150M, 280M, 1B). Models trained with \algname \weights have lower perplexity on all domains than the baseline weights.}
\label{tab:per-domain-perplexity-pile}
\centering
\begin{adjustbox}{max width=0.9\textwidth}
\begin{tabular}{lrrrrr}
\toprule
                 & Baseline (8B) & \algname (70M->8B) & \algname (150M->8B) & \algname (280M->8B) & \algname (1B->8B) \\
             \midrule
Pile-CC           & 1.64     & 1.51           & 1.48            & 1.41            & 1.55          \\
PubMed Central    & 1.60     & 1.58           & 1.54            & 1.46            & 1.56          \\
Books3            & 1.65     & 1.52           & 1.50            & 1.42            & 1.57          \\
OpenWebText2      & 1.66     & 1.48           & 1.54            & 1.36            & 1.58          \\
ArXiv             & 1.64     & 1.56           & 1.53            & 1.38            & 1.51          \\
Github            & 1.65     & 1.55           & 1.54            & 1.42            & 1.53          \\
FreeLaw           & 1.64     & 1.55           & 1.54            & 1.45            & 1.55          \\
StackExchange     & 1.61     & 1.52           & 1.54            & 1.39            & 1.55          \\
USPTO Backgrounds & 1.70     & 1.53           & 1.50            & 1.41            & 1.56          \\
PubMed Abstracts  & 1.61     & 1.56           & 1.51            & 1.44            & 1.55          \\
Gutenberg (PG-19) & 1.70     & 1.56           & 1.54            & 1.35            & 1.52          \\
OpenSubtitles     & 1.58     & 1.56           & 1.52            & 1.40            & 1.55          \\
Wikipedia (en)    & 1.66     & 1.49           & 1.53            & 1.35            & 1.56          \\
DM Mathematics    & 1.63     & 1.50           & 1.56            & 1.38            & 1.48          \\
Ubuntu IRC        & 1.71     & 1.53           & 1.49            & 1.42            & 1.48          \\
BookCorpus2       & 1.64     & 1.57           & 1.54            & 1.43            & 1.57          \\
EuroParl          & 1.59     & 1.52           & 1.51            & 1.37            & 1.53          \\
HackerNews        & 1.66     & 1.50           & 1.55            & 1.45            & 1.55          \\
YoutubeSubtitles  & 1.67     & 1.63           & 1.55            & 1.42            & 1.53          \\
PhilPapers        & 1.67     & 1.55           & 1.49            & 1.39            & 1.53          \\
NIH ExPorter      & 1.63     & 1.51           & 1.48            & 1.36            & 1.52          \\
Enron Emails      & 1.62     & 1.48           & 1.52            & 1.44            & 1.56         \\
\bottomrule
\end{tabular}
\end{adjustbox}
\end{table}

\begin{table}
\caption{Per-task exact-match accuracies for generative one-shot tasks. All \algname models improve downstream performance significantly over the baseline \weights.}
\label{tab:pile-8b-pertask-generative}
\centering
\begin{adjustbox}{max width=0.9\textwidth}
\begin{tabular}{lrrrrr}
\toprule
                 & Baseline & \algname (1B->8B) & \algname (280M->8B) & \algname (150M->8B) & \algname (70M->8B) \\
                 \midrule
LAMBADA          & 20.10    & 22.55             & 29.19               & 20.59               & 26.20              \\
NaturalQuestions & 4.35     & 6.01              & 7.73                & 6.26                & 5.10               \\
SQuADv2          & 44.43    & 42.22             & 51.89               & 46.53               & 40.99              \\
TriviaQA         & 24.55    & 32.25             & 34.86               & 30.01               & 26.30              \\
WebQuestions     & 6.74     & 8.71              & 9.15                & 9.15                & 6.99               \\
\midrule
Average          & 20.03    & 22.35             & 26.56               & 22.51               & 21.11             \\
\bottomrule
\end{tabular}
\end{adjustbox}
\end{table}

\begin{table}
\caption{Summary of per-domain log-perplexities for 280M, 510M, 760M, and 1B models trained on The Pile, where the reference/proxy models are the same size. \algname improves the worst-case and average perplexity of the baseline \weights in all cases. On average, \algname improves perplexity on 18 out of 22 domains.  }
\label{tab:scaling-perplexity-pile}
\centering
\begin{adjustbox}{max width=0.9\textwidth}
\begin{tabular}{lrrrrr}
\toprule
                    & Worst-case log-ppl & Avg log-ppl & \# domains beating baseline \\
                    \midrule
Baseline (280M)     & 2.39            & 2.32     & 0/22                          \\
\algname (280M->280M)   & 2.19            & 2.13     & 22/22                         \\
Proxy (280M) & 2.33            & 2.27     & 19/22                         \\
\midrule
Baseline (510M)     & 2.14            & 2.08     & 0/22                          \\
\algname (510M->510M)   & 2.14            & 2.06     & 15/22                         \\
Proxy (510M) & 2.23            & 2.18     & 0/22                          \\
\midrule
Baseline (760M)     & 2.05            & 1.97     & 0/22                          \\
\algname (760M->760M)   & 2.00               & 1.94     & 17/22                         \\
Proxy (760M) & 2.15            & 2.10      & 0/22                          \\
\midrule
Baseline (1B)       & 1.94            & 1.87     & 0/22                          \\
\algname (1B->1B)       & 1.92            & 1.83     & 19/22                         \\
Proxy (1B)   & 2.11            & 2.02     & 0/22               \\
\bottomrule
\end{tabular}
\end{adjustbox}
\end{table}

\begin{table}
\caption{Summary of perplexity results for ablations on the DRO objective (excess loss). The individual components (which prefer hardest and easiest domains respectively) do not reduce perplexity over the baseline. }
\label{tab:ablations-perplexity-pile}
\centering
\begin{adjustbox}{max width=0.9\textwidth}
\begin{tabular}{lrrrrr}
\toprule
                                                  & Worst-case log-ppl & Avg log-ppl & \# domains beating baseline \\
                                                  \midrule
Baseline (280M)                      & 2.39            & 2.32     & 0                          \\
\algname (280M->280M)           & 2.19            & 2.13     & 22/22                         \\
Hardest (280M->280M)          & 2.66            & 2.62     & 0/22                          \\
Easiest (280M->280M)          & 4.27            & 4.18     & 0/22                          \\
\bottomrule
\end{tabular}
\end{adjustbox}
\end{table}

\section{Detailed Results for The Pile}
\label{app:full-results-pile}

\paragraph{Per-domain perplexities for 8B models.}
Table~\ref{tab:per-domain-perplexity-pile} shows per-domain perplexities for 8B models trained on the Pile. The reference/proxy models in this case are 70M, 150M, 280M, and 1B. \algname improves the perplexity on each domain compared to the baseline \weights.

\paragraph{Per-task accuracies for 8B models.}
Table~\ref{tab:pile-8b-pertask-generative} shows the accuracies on one-shot generative tasks for various reference/proxy model sizes from 70M to 1B. All \algname models improve downstream performance significantly over the baseline.

\paragraph{Summary of perplexity results across scales.}
Table~\ref{tab:scaling-perplexity-pile} shows a summary of per-domain perplexities for \algname across 4 scales (280M, 510M, 760M, 1B). Here, the reference/proxy models are the same size as the main model trained with \algname \weights.
On average, \algname improves perplexity on 18.25 out of 22 domains from The Pile. The worst-case perplexity is always reduced (or comparable in the 510M case) with respect to the baseline \weights.

\paragraph{Perplexity results for ablations.}
Table~\ref{tab:ablations-perplexity-pile} shows the perplexities for ablations on the DRO objective. We change the DRO objective and use these to tune \weights on 280M reference/proxy models. These tuned \weights are then used to train a main 280M model. Hardest refers to optimizing the domain-level log-perplexity without baselining with a reference model. Easiest refers to optimizing for the domains with lowest log-perplexity under the reference model.
Both ablations do not improve perplexity on any domain over the baseline.
Optimizing for the ``hardest'' domain does not actually result in improving worst-case perplexity, supporting the results of~\citet{oren2019drolm}, which also employs DRO for language modeling with a baselined loss.

\begin{figure}
\centering
\begin{subfigure}{1.0\textwidth}
\centering
\includegraphics[width=\textwidth]{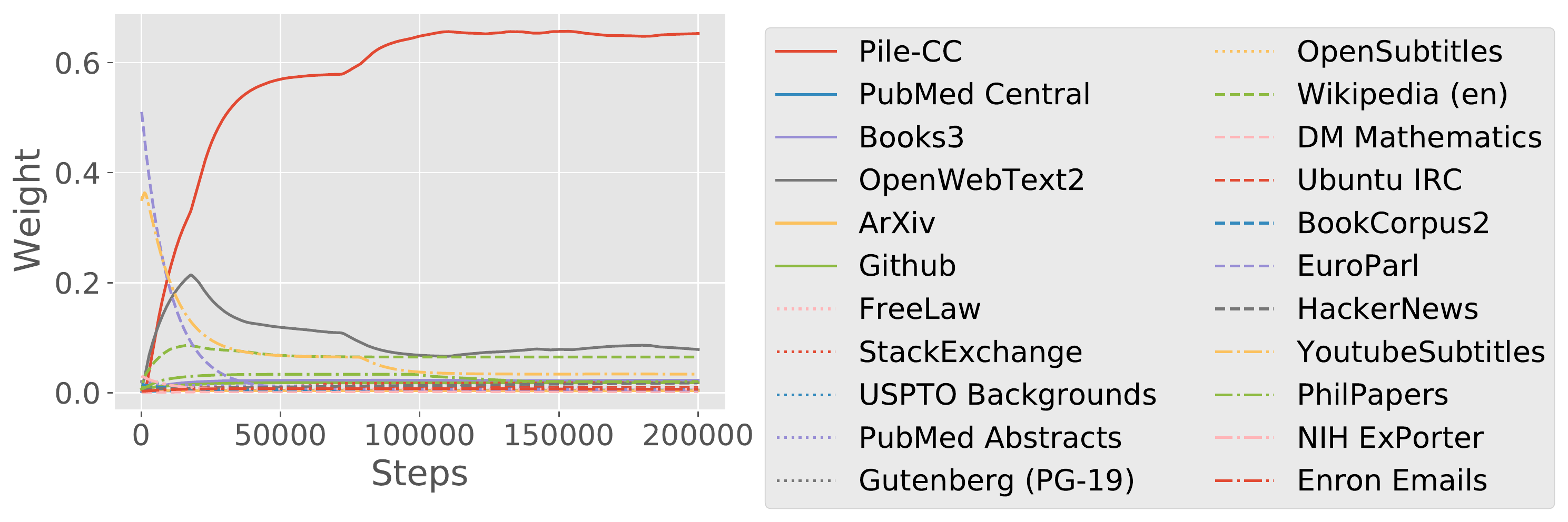}
\caption{280M}
\end{subfigure}
\hfill
\begin{subfigure}{1.0\textwidth}
\centering
\includegraphics[width=\textwidth]{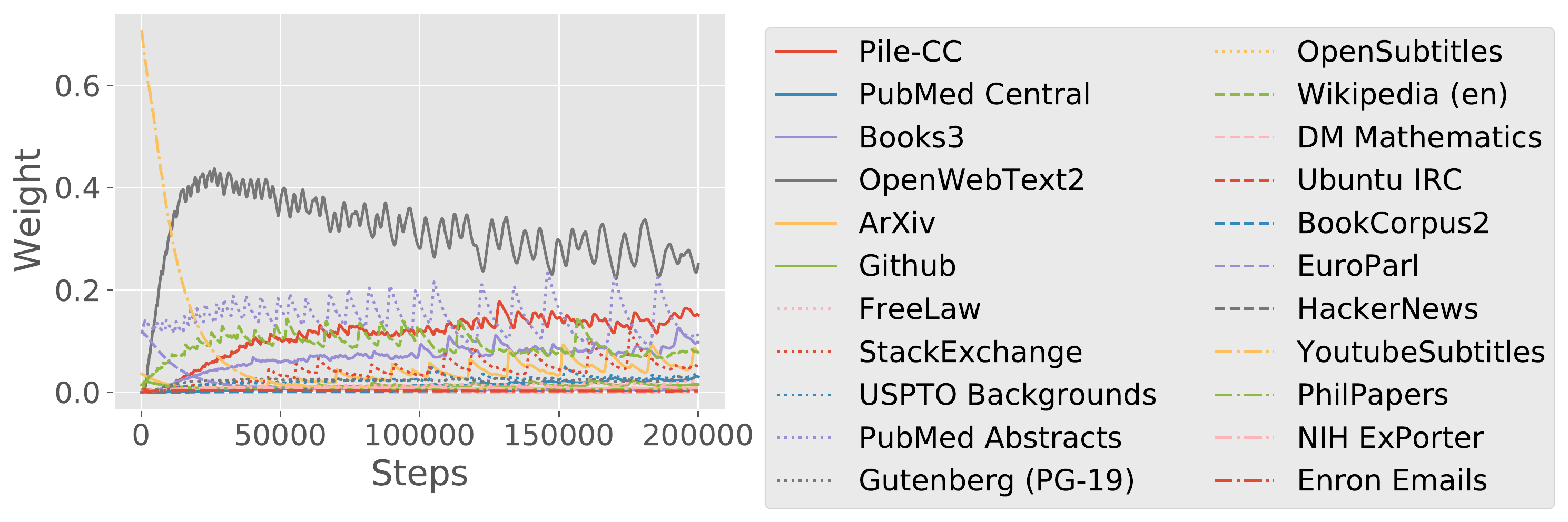}
\caption{1B}
\end{subfigure}
\caption{Exponential moving average of \weights throughout a \algname run for 280M and 1B reference/proxy models. In the beginning of the run, the \weights change quickly and then become more stable after 50k steps. This suggests that 1) smaller compute budgets may require drastically different \weights, and 2) we may be able to save compute by extrapolating the \weights after 50k steps.}
\label{fig:pile-weights-evolution}
\end{figure}

\paragraph{Trajectory of \weights.}
Figure~\ref{fig:pile-weights-evolution} shows the exponential moving average (smoothing parameter 0.99) of \weights during a run of \algname. In both cases, there are domains with very high weight initially and decrease in weight very quickly (within 50k steps). Since we compute the final \weights by integrating these curves over steps and normalizing, this suggests that if we have a smaller compute budget, these domains could become more important --- this highlights the dependence of the mixture weights on the compute budget. At the same time, the \weights tend to quickly stabilize after 50k steps, suggesting that the optimal \weights should be similar for larger compute budgets.
We may also be able to take advantage of this stability after 50k steps to run \algname for a smaller number of steps and extrapolate the \weights to save compute.

\begin{table}
\caption{\Weights on The Pile. Baseline \weights are computed from the default Pile dataset. With different proxy model sizes, \algname (280M) and \algname (1B) result in different \weights. Despite the differences, the qualitative patterns are similar other than the which web domain has the most weight.
}
\label{tab:pile-mixture-weights-1B}
\centering
\begin{adjustbox}{max width=0.48\textwidth}
\begin{tabular}{lrrrr}
\toprule
                  & Baseline & \algname (280M) & \algname (1B) \\
                  \midrule
Pile-CC           & 0.1121   & 0.6057      & 0.1199    \\
PubMed Central    & 0.1071   & 0.0046      & 0.0149    \\
Books3            & 0.0676   & 0.0224      & 0.0739    \\
OpenWebText2      & 0.1247   & 0.1019      & 0.3289    \\
ArXiv             & 0.1052   & 0.0036      & 0.0384    \\
Github            & 0.0427   & 0.0179      & 0.0129    \\
FreeLaw           & 0.0386   & 0.0043      & 0.0148    \\
StackExchange     & 0.0929   & 0.0153      & 0.0452    \\
USPTO Backgrounds & 0.0420   & 0.0036      & 0.0260    \\
PubMed Abstracts  & 0.0845   & 0.0113      & 0.1461    \\
Gutenberg (PG-19) & 0.0199   & 0.0072      & 0.0250    \\
OpenSubtitles     & 0.0124   & 0.0047      & 0.0017    \\
Wikipedia (en)    & 0.0919   & 0.0699      & 0.0962    \\
DM Mathematics    & 0.0198   & 0.0018      & 0.0004    \\
Ubuntu IRC        & 0.0074   & 0.0093      & 0.0044    \\
BookCorpus2       & 0.0044   & 0.0061      & 0.0029    \\
EuroParl          & 0.0043   & 0.0062      & 0.0078    \\
HackerNews        & 0.0075   & 0.0134      & 0.0058    \\
YoutubeSubtitles  & 0.0042   & 0.0502      & 0.0159    \\
PhilPapers        & 0.0027   & 0.0274      & 0.0063    \\
NIH ExPorter      & 0.0052   & 0.0063      & 0.0094    \\
Enron Emails      & 0.0030   & 0.0070      & 0.0033   \\
\bottomrule
\end{tabular}
\end{adjustbox}
\end{table}

\paragraph{Comparison of \weights for 280M and 1B.}
Table~\ref{tab:pile-mixture-weights-1B} presents the \algname \weights for The Pile at 280M and 1B proxy models.
Different proxy model sizes can result in different \weights, which suggests that there may be multiple local minima in \weight space.
With a 280M proxy model, most of the weight is put on the Pile-CC web text domain, while \algname with a 1B proxy model puts most of the weight on OpenWebText2.
The overall pattern of the \weights for the rest of the domains are similar.

\section{Training Details}
\label{app:training-details}

\paragraph{Data preprocessing.}
For all datasets, we preprocessed the data by chunking into length 1024 examples with respect to a SentencePiece tokenizer with 256k vocabulary size. The examples are separated by domain to facilitate hierarchical sampling (first sample a domain according to some \weights, then sample an example from that domain at random).
To reduce the amount of padding tokens, we made an effort to pack examples (possibly from different domains) together into the same sequence.
When doing such a packing, we compute the domain perplexities on a per-token level in \algname.

\paragraph{Baseline \weights for The Pile.}
The baseline \weights for The Pile were computed from The Pile dataset and the number of epochs for each domain given in~\citet{gao2020pile}. After chunking into length 1024 examples, we counted the number of examples in each domain and multiplied by the number of epochs that domain specified in~\citet{gao2020pile}.
We then normalized these counts to obtain the baseline \weights.

\paragraph{Training setup.}
For all training runs (including DRO runs), we train with a batch size of 512, initial learning rate of 1e-3, weight decay of 1e-2, and gradient clipping to norm 1. We decay the learning rate exponentially until it reaches a minimum of 1e-4 at the end of training, with a linear warmup of 6\% of the total training steps.
We train for 200k steps on The Pile and 300k steps on the GLaM dataset.
Models under 1B parameters were trained with TPUv3 accelerators, while 1B and 8B models were trained with TPUv4.

\paragraph{Model architectures.}
Table~\ref{tab:archictectures} shows the architecture hyperparameters for the model sizes used in the paper. All the models we use are vanilla Transformer decoder-only models with a 256k vocab size. 

\begin{table}
\caption{Architecture hyperparameters for various model scales used in the paper. All models are vanilla Transformer decoder-only models and use vocabulary size 256k.}
\label{tab:archictectures}
\centering
\begin{adjustbox}{max width=0.9\textwidth}
\begin{tabular}{lrrrrr}
\toprule
     & Layers & Attention heads & Attention head dim & Model dim & Hidden dim \\
     \midrule
70M  & 3      & 4               & 64                 & 256       & 1024       \\
150M & 6      & 8               & 64                 & 512       & 2048       \\
280M & 12     & 12              & 64                 & 768       & 3072       \\
510M & 12     & 16              & 64                 & 1024      & 8192       \\
760M & 12     & 20              & 64                 & 1280      & 8192       \\
1B   & 16     & 32              & 64                 & 2048      & 8192       \\
8B   & 32     & 32              & 128                & 4096      & 24576   \\
\bottomrule
\end{tabular}
\end{adjustbox}
\end{table}

\section{Simple Example Where Data Reweighting Has No Tradeoff}
\label{sec:simple-example}
Motivated by the findings in Section~\ref{sec:main-results-8B}, we present a simple language modeling example where reweighting the training data from different domains improves perplexity on all domains.  
The example shows that \algname downweights domains that are extremely high or low entropy.

\paragraph{Setup.}
Suppose the ground-truth distribution of text $\pstar$ is a mixture over $k$ domains, where each domain $z \in \{1,\dots, k\}$ is defined by a different unigram distribution $\pstar(x \mid z)$ over $m$ tokens.
Given a budget of $n$ training samples, the goal is choose \weights $p(z)$ ($k$ scalars that add to 1) to sample training data with such that we learn the parameters of the unigram distributions $\pstar(\cdot \mid z)$ well for all $z$ from $1$ to $k$. Notably, we do not aim to estimate the ground truth mixture proportions across domains.

\paragraph{Data.}
Given some \weights $p(z)$, we sample training data hierarchically: first we determine the number of samples $n_z$ per domain $z$ by drawing from a multinomial distribution over $k$ possibilities with probabilities defined by $p(z)$ and $n$ total trials.
Then, for each domain $z$, we sample $n_z$ tokens from $\pstar(\cdot \mid z)$, forming a vector of tokens $X_z$ with length $n_z$.

\paragraph{Model.} 
For each domain $z$, we consider a Bayesian model of the unigram distribution $p(x \mid z;\theta)$ with a Dirichlet prior $p(\theta \mid z; \beta)$ over the unigram distribution parameters $\theta \in \Delta^m$.
The Dirichlet prior has hyperparameters $\beta \in \R^m$, which can be viewed as a ``pseudo-count'' for each token.
For each domain $z$, we estimate the parameters $\hat{\theta}_z$ by computing the mean of the posterior distribution conditioned on the data:
\begin{align}
\label{eqn:simple-estimator}
\thetaxprimex &= \frac{1}{n_z + s_z} \left[ \lambdaxprimex + \sum_{i=1}^{n_z} \indicator[\examplexi=x] \right] \text{~~for all $x \in \{1,\dots,m\}$}
\end{align}
where $s_z = \sum_{x} \lambdaxprimex$ is the sum of pseudocounts.

For a domain $z$, we can write the parameter error of this estimator as a function of the ``difficulty'' $H_z$ of predicting the next token and the ``quality'' of the prior $\Delta_z$, defined below.
\begin{lemma}
\label{lem:param-error-simple}
For domain index $z$ with $n_z$ samples, the parameter error is
\begin{align}
\sum_{x} \E[(\thetaxprimex - \pstar(x\mid z))^2] &=
\frac{n_z H_z + s_z^2 \Delta_z}{(n_z + s_z)^2}
\end{align}
where
\begin{align}
    H_z &= \sum_{x}\pstar(x \mid z)(1-\pstar(x\mid z))\\
   \Delta_z &= \sum_{x} \left(\pstar(x \mid z) - \frac{\lambdaxprimex}{s_z}\right)^2.
\end{align}
\end{lemma}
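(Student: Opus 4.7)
The plan is a direct bias-variance calculation on each coordinate $\hat\theta_z(x)$, followed by summing across the vocabulary. First, I would fix the domain $z$ and condition on the sample size $n_z$, so that the counts $N_x \coloneqq \sum_{i=1}^{n_z} \indicator[X_z[i]=x]$ satisfy $N_x \sim \text{Binomial}(n_z, p^*(x \mid z))$ with $\E[N_x] = n_z p^*(x \mid z)$ and $\Var(N_x) = n_z p^*(x \mid z)(1 - p^*(x \mid z))$.

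Next, I would rewrite the estimator error as a single fraction
\begin{align}
\hatthetaz(x) - \pstar(x \mid z) = \frac{N_x - n_z \pstar(x \mid z) + \lambdaxprimex - s_z \pstar(x \mid z)}{n_z + s_z},
\end{align}
and apply the identity $\E[Y^2] = \Var(Y) + (\E[Y])^2$ to the numerator. Since $\lambdaxprimex - s_z\pstar(x \mid z)$ is deterministic and only $N_x$ is random, this yields
\begin{align}
\E\!\left[(\hatthetaz(x) - \pstar(x \mid z))^2\right] = \frac{n_z \pstar(x \mid z)(1 - \pstar(x \mid z)) + (\lambdaxprimex - s_z \pstar(x \mid z))^2}{(n_z + s_z)^2}.
\end{align}

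Finally, I would sum over $x \in \{1,\dots,m\}$. The variance terms collapse to $n_z H_z$ by definition of $H_z$, and for the bias terms I factor out $s_z^2$ from $(\lambdaxprimex - s_z \pstar(x \mid z))^2 = s_z^2 (\lambdaxprimex/s_z - \pstar(x \mid z))^2$, which then sums to $s_z^2 \Delta_z$. Combining and dividing through by $(n_z+s_z)^2$ gives the claim.

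There is no real obstacle here; the calculation is elementary once one notes the Binomial moments and that the prior contribution $\lambdaxprimex - s_z\pstar(x\mid z)$ is deterministic, so only bookkeeping is needed to collect the two terms in the stated form. The only subtle point is making sure the expectation is interpreted conditional on $n_z$, since the overall data-generating process also has randomness in $n_z$ from the outer multinomial; the lemma's phrasing ``with $n_z$ samples'' indicates this conditioning.
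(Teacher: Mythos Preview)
Your proposal is correct and follows essentially the same approach as the paper: a direct moment computation for each coordinate followed by summation over $x$. Your organization via the bias--variance identity $\E[Y^2]=\Var(Y)+(\E[Y])^2$ on the centered numerator is slightly cleaner than the paper's version, which expands $\E[\hatthetaz(x)^2]$ and $\E[\hatthetaz(x)]$ separately and then simplifies the resulting polynomial, but the underlying calculation is the same.
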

\begin{proof}
The parameter error is
\begin{align}
\sum_{x} \E[(\thetaxprimex - \pstar(x\mid z))^2] &= \sum_{x} \E[\thetaxprimex^2] - 2\E[\thetaxprimex]\pstar(x\mid z) + \pstar(x \mid z)^2.
\end{align}
Evaluating the terms separately,
\begin{align}
\E[\thetaxprimex] &= \frac{1}{n_z + s_z} \left[ \lambdaxprimex + \sum_{i=1}^{n_z} \indicator[\examplexi=x] \right]\\
&= \frac{1}{n_z + s_z}(\lambdaxprimex + n_z \pstar(x\mid z))\\
\E[\thetaxprimex^2] &= \frac{1}{(n_z+s_z)^2}\E[(\lambdaxprimex + \sum_{i=1}^{n_z} \indicator[\examplexi = x])^2]\\
&= \frac{1}{(n_z+s_z)^2}\left[\lambdaxprimex^2 + 2\lambdaxprimex n_z\pstar(x\mid z) + n_z \pstar(x \mid z) + (n_z^2 - n_z)\pstar(x\mid z)^2\right]
\end{align}
Putting it all together, the parameter error can be written as
\begin{align}
\sum_{x} \E[(\thetaxprimex - \pstar(x\mid z))^2] &= \sum_{x} \frac{(s_z^2 - n_z)\pstar(x\mid z)^2 + \lambdaxprimex^2 + (n_z - 2s_z\lambdaxprimex)\pstar(x\mid z)}{(n_z + s_z)^2 }\\
&= \sum_{x} \frac{n_z \pstar(x \mid z)(1-\pstar(x \mid z)) + s_z^2\left(\pstar(x\mid z) - \frac{\lambdaxprimex}{s_z} \right)^2}{(n_z+s_z)^2}\\
&=\frac{n_z H_z + s_z^2 \Delta_z}{(n_z + s_z)^2}.
\end{align}
\end{proof}

\paragraph{No-tradeoff example.}
Suppose there are 3 domains $z \in \{1,2,3\}$ and $m=3$ vocabulary tokens $x \in \{1,2,3\}$.
We use a symmetric Dirichlet prior (preferring a uniform token distribution) where $\lambdaxprimex = 1/3$ for all tokens $x$ and domains $z$.
Here, $s_z = \sum_{x} \lambdaxprimex = 1$.
In this setting, we show that there is a set of \weights that has strictly lower parameter error than the baseline where we sample the same number of tokens from each domain: $n_z$ are equal for all domains $z$.

Suppose the ground truth paramaters for the unigram distributions are
\begin{align}
\label{eqn:simple-transmat}
    \begin{bmatrix}
        1 & 0 & 0\\
        0.7 & 0.2 & 0.1 \\
        1/3 & 1/3 & 1/3
    \end{bmatrix},
\end{align}
where row $z$ contains the parameters for domain $z$. For example, token 1 has probability 1 under domain 1's unigram distribution. 

For domain $z=1$ (non-noisy domain), we have $H_1=0$ so the parameter error (according to Lemma~\ref{lem:param-error-simple}) is
\begin{align}
    \frac{s_1^2 \Delta_1}{(n_1 + s_1)^2}
\end{align}
which is strictly decreasing in the number of samples $n_1$.

For domain $z=3$ (noisy domain), we have $\Delta_3=0$ so the parameter error is
\begin{align}
\frac{n_3 H_3}{(n_3 + s_3)^2},
\end{align}
by Lemma~\ref{lem:param-error-simple}.
This error is minimized to zero at $n_3=0$ (no samples).
This means that we can allocate samples elsewhere while still reducing error.

For $z=2$ (intermediate entropy domain), we have $\Delta_2 = 0.207$ and $H_2 = 0.46$.
The derivative of the parameter error with respect to the number of samples $n_2$ is
\begin{align}
    \frac{\partial}{\partial n_2} \frac{n_2 H_2 + s_2^2 \Delta_2}{(n_2 + s_2)^2} = \frac{H_2(s_2 - n_2) - 2 s_2^2 \Delta_2}{(n_2 + s_2)^3}
\end{align}
which is negative when
\begin{align}
    n_2 > s_2 - \frac{2s_2^2\Delta_2}{H_2}.
\end{align}
This inequality holds in this case since $\frac{2\Delta_2}{H_2} < 1$ and $s_2 = 1$. Therefore the parameter error is decreasing in the number of samples $n_2$.

Thus, any \weights that reallocate the examples from domain 3 to domains 1 and 2 reduces the parameter error for all domains.

\paragraph{What kind of domains are downweighted?}
Intuitively, we can downweight the very noisy (high entropy/difficulty) domain 3 because the initialization perfectly matches the ground truth. This allows us to reallocate samples to the other domains 1 and 2. Between these, domain 1 requires less additional samples since the parameter error decreases very quickly with the number of samples $n_1$ (the difficulty $H_1$ is zero). Thus, the easiest domains should also receive relatively less weight. In practice, positive transfer between domains (which is not captured here) can also contribute to scenarios where reweighting results in no tradeoff across domains.

\paragraph{Simulation with \algname.}
We consider running \algname on the above no-tradeoff instance of the simple example with the ground truth unigram distributions in Equation~\ref{eqn:simple-transmat}.
Note that \algname's domain reweighting step (Step 2, Algorithm~\ref{alg:alg1}) involves a loop over $T$ iterative model updates, while the estimator from Equation~\ref{eqn:simple-estimator} is computed in closed form.
To adapt the estimator for \algname, we consider an iterative version where the average is computed in an online fashion.
We run \algname for $T=500$ steps using minibatch size 1 over the $n=500$ training examples with \weight update rate $\eta=0.5$. For the model update at step $t$ on an example $x$ from domain $z$, we increase the pseudo-count $\thetaxprimex$ by the current \weight $\alpha_t$ corresponding to domain $z$.
Instead of using the examples in the minibatch (which is only size 1 and doesn't represent all domains), we compute the per-domain excess log-perplexities in Algorithm~\ref{alg:alg1} using a fixed, independent evaluation set of 30 examples.

We compare \algname against a model trained with baseline \weights, which are uniform over the 3 domains. All models are trained on $n=500$ training examples.
We evaluate the log-perplexity of a model on each domain in closed form using the ground truth unigram distribution parameters.

On this simple example, \algname returns \weights $[0.39, 0.61, 0.0]$ after rounding to 2 decimal places. These weights correspond to our intuitions --- the first domain (non-noisy) is increased by a small amount, the third domain (noisy) is decreased to 0 weight, and most of the weight is allocated to the second domain. We use these \weights to generate a new dataset of 500 examples. The model trained with this new dataset improves over the baseline model in perplexity on all domains.

\end{document}